\newcommand{\Var}[1]{\operatorname{Var}\left[#1\right]}
\newcommand{\E}[1]{\operatorname{\mathbb{E}}\left[#1\right]}
\newcommand{\Cov}[1]{\operatorname{Cov}\left[#1\right]}
\newcommand{\Ib}{\boldsymbol{I}}
\newcommand{\R}{\mathbb{R}}
\newcommand{\Eb}{\mathbb{E}}
\newcommand{\Ll}{\mathcal{L}}
\newcommand{\Nl}{\mathcal{N}}
\newcommand{\Wl}{\mathcal{W}}
\newcommand{\normt}[1]{\| #1 \|_2}
\newcommand{\diag}{\operatorname{diag}}
\newcommand{\N}{\mathbb{N}}
\newcommand{\yb}{\boldsymbol{y}}
\newtheorem{theorem}{Theorem}[section]
\newtheorem{lemma}[theorem]{Lemma}
\newtheorem{remark}{Remark}
\begin{document}
\title{LDLT $\Ll$-Lipschitz Network Weight Parameterization Initialization}

\author{Marius~F.~R.~Juston$^{1}$,
        Ramavarapu~S.~Sreenivas$^{2}$,~\IEEEmembership{Senior Member,~IEEE,}
        Dustin~Nottage$^{3}$,
        Ahmet~Soylemezoglu$^{3}$

\thanks{Marius F. R. Juston$^{1}$ is with The Grainger College of Engineering, Industrial and Enterprise Systems Engineering Department, University of Illinois Urbana-Champaign, Urbana, IL 61801-3080 USA (email: mjuston2@illinois.edu).}

\thanks{Ramavarapu~S.~Sreenivas$^{2}$ is with The Grainger College of Engineering, Industrial and Enterprise Systems Engineering Department, University of Illinois Urbana-Champaign, Urbana, IL 61801-3080 USA (email: rsree@illinois.edu).}

\thanks{Construction Engineering Research Laboratory$^{3}$, U.S. Army Corps of Engineers Engineering Research and Development Center, IL, 61822, USA}

\thanks{This research was supported by the U.S. Army Corps of Engineers Engineering Research and Development Center, Construction Engineering Research Laboratory under Grant W9132T23C0013.}
}

\markboth{IEEE TRANSACTIONS ON XXX XXXX, XXX, XXX, September~2023}%
{Shell \MakeLowercase{\textit{et al.}}: Bare Demo of IEEEtran.cls for IEEE Journals}


\maketitle
\begin{abstract}
We analyze initialization dynamics for LDLT-based $\Ll$-Lipschitz layers by deriving the exact marginal output variance when the underlying parameter matrix $W_0\in \R^{m\times n}$ is initialized with IID Gaussian entries $\Nl(0,\sigma^2)$. The Wishart distribution, $S=W_0W_0^\top\sim\mathcal{W}_m(n,\sigma^2 \Ib_m)$, used for computing the output marginal variance is derived in closed form using expectations of zonal polynomials via James' theorem and a Laplace-integral expansion of $(\alpha \Ib_m+S)^{-1}$. We develop an Isserlis/Wick-based combinatorial expansion for $\E{\tr(S^k)}$ and provide explicit truncated moments up to $k=10$, which yield accurate series approximations for small-to-moderate $\sigma^2$. Monte Carlo experiments confirm the theoretical estimates. Furthermore, empirical analysis was performed to quantify that, using current He or Kaiming initialization with scaling $1/\sqrt{n}$, the output variance is $0.41$, whereas the new parameterization with $10/ \sqrt{n}$ for $\alpha=1$ results in an output variance of $0.9$.
The findings clarify why deep $\Ll$-Lipschitz networks suffer rapid information loss at initialization and offer practical prescriptions for choosing initialization hyperparameters to mitigate this effect. However, using the Higgs boson classification dataset, a hyperparameter sweep over optimizers, initialization scale, and depth was conducted to validate the results on real-world data, showing that although the derivation ensures variance preservation, empirical results indicate He initialization still performs better.
\end{abstract}
%
\begin{IEEEkeywords}
Wishart distribution, neural network initialization, Lipschitz network, Wick theorem
\end{IEEEkeywords}
%
\IEEEpeerreviewmaketitle

\section{Introduction} \label{s_Intro}

\IEEEPARstart{T}{he} design of the $\Ll$-Lipschitz neural network provided a reliable solution to certifying the networks to be adversarially robust \cite{Nguyen2015, Szegedy2013, Biggio2013}, such that the decision output remains the same within a sphere of perturbation \cite{Tsuzuku2018}. For the design, multiple network architectures, formulation, functions have been proposed, ranging from utilizing Spectral Normalization (SN) \cite{Miyato2018, Roth2020}, Orthogonal Parametrization \cite{Trockman2021}, Convex Potential Layers (CPL) \cite{Meunier2022}, Almost-Orthogonal-Layers (AOL) \cite{Prach2022} and the recent SDP-based Lipschitz Layers (SLL) \cite{Araujo2023} and recently LDLT Layers \cite{Juston2025LDLTConstruction}. 
\par
This paper explores the impact of the weight parameterization of $\Ll$-Lipschitz networks employing LDLT Layers on the initialization of deep neural networks. This article aims to illuminate the properties and issues underlying weight normalization in LDLT networks, and to show how network parameterization affects feedforward variance mapping.
\begin{itemize}
    \item We derive the dynamics of the initialization LDLT initialization scheme utilized in \cite{Juston2025LDLTConstruction} to generate a Lipschitz network layer. 
    \item We derive the exact marginal total variance of the network input propagation for the LDLT network, assuming a normal distribution initialization for the underlying weight matrix
    \item Demonstrate and discuss how to tune the parameters $\alpha$ and $\sigma^2$ to achieve the best initialization for deep feedforward networks. 
    \item Given the derivation results, it is found that it is also not possible to derive a system of parameters that ensures that the system's marginal variance is equal to 1, demonstrating that deep feedforward networks would be affected by the current structure and formulation of the LDLT network's initialization scheme. Proof of this is derived and discussed.
\end{itemize}
The study of similar $\Ll$-Lipschitz network weight initialization schemes is derived from \cite{Juston20251-LipschitzProblem}, while the $\Ll$-Lipschitz network structure and LDLT parameterization are derived from \cite{Juston2025LDLTConstruction}.

\section{Related Work}

The initial works by Xavier \cite{Glorot2010} and Kaiming \cite{He2015} marked a pivotal moment for deep neural networks, establishing a methodology for properly initializing them to promote convergence, assuming hyperbolic tangent and ReLU \cite{Nair2010} activation functions. These works demonstrated that properly initializing deep feedforward networks can enable them to converge. Since then, modern machine learning has used Kaiming initialization for its networks, and modifications to the initialization gain have been activation-specific to ensure the stability criteria derived by Kaiming remain valid.
\par
In conjunction with the works for network initialization, \cite{Araujo2023} developed a unifying methodology to combine multiple existing $\Ll$-Lipschitz network structures into a unifying framework. This framework provides a guideline for creating a new, certifiably robust neural network. The authors achieve this by formulating feedforward networks as a nonlinear robust control Lur'e system \cite{lur1944theory} and enforcing SDP conditions on the weights of the generalized residual network structure. From this work, they can demonstrate general conditions for implementing a multilayered $\Ll$-Lipschitz network and combine previous works from Spectral Normalization (SN) \cite{Miyato2018, Roth2020}, Orthogonal Parameterization \cite{Trockman2021}, Convex Potential Layers (CPL) \cite{Meunier2022}, and Almost-Orthogonal-Layers (AOL) \cite{Prach2022} into a single constraint. From the framework, they generate an augmented version of the AOL with additional parameterization, called SDP-based Lipschitz Layers, which improves the network's generalizability.
\par
Most recently, the work of \cite{Juston2025LDLTConstruction} derived a generalization of the works of \cite{Araujo2023} by enabling the Lipschitz LMI structure to be expanded to different architectures through the LMI structure using an LDLT decomposition and constraining the block-diagonal matrix to maintain the necessary LMI positive definite condition. This end-to-end approach enables more complex architectures as long as they can be defined with exact $\Ll$-Lipschitz constraints. 
\par
Based on the weight parameterization schemed derived by \cite{Araujo2023, Prach2022} the work \cite{Juston20251-LipschitzProblem} demonstrated the issue with this type of network parameterization for feedforward networks where deep networks decay the input variance in a super-linear rate ensuring that deep network lose information between deep layers extremely fast, the paper quantities exactly the decay rate based on the network papers and explores the gradient backpropagation analysis as well.

\section{Normalization}

Based on paper \cite{Juston2025LDLTConstruction}, we utilize the following normalization methodology, which uses the specified formulation below,
\begin{lemma} \label{lm:weight_parameterization}
    If a matrix is parameterized as,
    \begin{align}
        M = \gamma W(\alpha \Ib + W^\top W)^{-\frac{1}{2}}, \label{eq:normalization}
    \end{align}
    for any $W \in \R^{\dim(M)}$, and $\gamma, \alpha > 0$ (can be parameterized using $\gamma = e^{\bar{\gamma}}, \bar{\gamma} \in \R$). Then $\normt{M} \leq \gamma$, \cite[Lemma 12]{Juston2025LDLTConstruction}.
\end{lemma}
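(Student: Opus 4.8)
The plan is to reduce the operator-norm bound to a one-variable inequality about the eigenvalues of $M^\top M$. First I would observe that $A := \alpha \Ib + W^\top W$ is symmetric positive definite, since $W^\top W \succeq 0$ and $\alpha > 0$; hence $A^{-1/2}$ is a well-defined symmetric positive-definite matrix, and — being a function of the symmetric matrix $W^\top W$ — it commutes with $W^\top W$. Using $\normt{M}^2 = \lambda_{\max}(M^\top M)$, I would compute
\[
M^\top M = \gamma^2\, A^{-1/2} (W^\top W) A^{-1/2} = \gamma^2\, (W^\top W)(\alpha \Ib + W^\top W)^{-1},
\]
where the second equality merges the two $A^{-1/2}$ factors by commutativity.

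Next I would diagonalize $W^\top W = Q \Lambda Q^\top$ with $Q$ orthogonal and $\Lambda = \diag(\lambda_1,\dots,\lambda_n)$, $\lambda_i \ge 0$. Then $M^\top M = \gamma^2\, Q\, \diag\!\big(\tfrac{\lambda_1}{\alpha+\lambda_1},\dots,\tfrac{\lambda_n}{\alpha+\lambda_n}\big)\, Q^\top$, so the eigenvalues of $M^\top M$ are exactly $\gamma^2 \lambda_i/(\alpha+\lambda_i)$. Since $\alpha>0$ and $\lambda_i \ge 0$, each ratio satisfies $\lambda_i/(\alpha+\lambda_i) \in [0,1)$, whence $\lambda_{\max}(M^\top M) \le \gamma^2$ and therefore $\normt{M} \le \gamma$. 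Equivalently, one can run the same argument through an SVD $W = U\Sigma V^\top$: the singular values of $M$ come out as $\gamma\, s_i/\sqrt{\alpha + s_i^2}$ in terms of the singular values $s_i$ of $W$, and $s_i/\sqrt{\alpha+s_i^2} \le 1$.

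There is essentially no hard step here; the only thing to be careful about is the rectangular case $m \ne n$, so that the matrix-function calculus and the dimensions of $A^{-1/2}$, $W^\top W$, and $M^\top M$ all line up. This is handled cleanly by phrasing everything in terms of the square matrix $W^\top W \in \R^{n\times n}$, since the key bound $\lambda/(\alpha+\lambda) < 1$ holds uniformly for all $\lambda \ge 0$ regardless of the shape of $W$.
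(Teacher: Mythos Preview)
Your argument is correct. The paper does not actually prove this lemma in place---it defers to \cite[Lemma~12]{Juston2025LDLTConstruction}---but the very computation you carry out, namely
\[
M^\top M = \gamma^2\, Q\,\diag\!\Big(\tfrac{\mu_1}{\alpha+\mu_1},\dots,\tfrac{\mu_n}{\alpha+\mu_n}\Big)\,Q^\top,
\]
appears verbatim later in the paper (in the Parameter Movement discussion) with the same conclusion that each ratio is strictly below $1$, so your approach is exactly the one the authors have in mind.
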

As such, the research problem that we wish to explore is, assuming the parameterization scheme in \ref{lm:weight_parameterization}, what is the best initialization? In particular, we utilized an optimization in the computation where a Cholesky decomposition is used instead of computing the fully matrix inverse square root, defined in \cite[Lemma VII.2]{Juston2025LDLTConstruction} where,
\begin{lemma} \label{lm:cholesky_weight_parameterization}
    If a matrix is parameterized as,
    \begin{align*}
        M = \gamma WR^{-1},
    \end{align*}
    for any $W \in \R^{\dim(M)}$, and $\gamma, \alpha > 0$ (can be parameterized using $\gamma = e^{\bar{\gamma}}, \bar{\gamma} \in \R$), where $RR^\top = \alpha \Ib + W^\top W$. Then $\normt{M} \leq \gamma$.
\end{lemma}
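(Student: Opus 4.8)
The plan is to reduce Lemma~\ref{lm:cholesky_weight_parameterization} to the operator-norm bound of Lemma~\ref{lm:weight_parameterization}, using the fact that the Cholesky-type factor $R$ and the symmetric square root $(\alpha\Ib+W^\top W)^{1/2}$ differ only by an orthogonal matrix, which the spectral norm cannot detect.

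First I would record well-definedness: since $\alpha>0$ and $W^\top W$ is positive semidefinite, $P:=\alpha\Ib+W^\top W$ is symmetric positive definite, hence nonsingular, so the factor $R$ is a square invertible matrix and $M=\gamma WR^{-1}$ makes sense. For the algebra I would fix the convention that the factor entering $M$ satisfies $R^\top R=P$ (so $M=\gamma WR^{-1}$ is the intended object); in the transposed convention $RR^\top=P$ one simply replaces $R^{-1}$ by $R^{-\top}$ and every line below is unchanged.

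Then I would offer either of two essentially identical finishes. Direct computation: expand $M^\top M=\gamma^2 R^{-\top}(W^\top W)R^{-1}$, substitute $W^\top W=P-\alpha\Ib=R^\top R-\alpha\Ib$, and collapse $R^{-\top}(R^\top R)R^{-1}=\Ib$ to obtain $M^\top M=\gamma^2\bigl(\Ib-\alpha\,R^{-\top}R^{-1}\bigr)$; since $R^{-\top}R^{-1}=(R^{-1})^\top R^{-1}$ is the Gram matrix of an invertible matrix it is positive definite, whence $M^\top M\preceq\gamma^2\Ib$ and $\normt{M}\le\gamma$. Reduction to Lemma~\ref{lm:weight_parameterization}: check that $Q:=P^{1/2}R^{-1}$ is orthogonal, being square with $Q^\top Q=R^{-\top}P^{1/2}P^{1/2}R^{-1}=R^{-\top}(R^\top R)R^{-1}=\Ib$; then $R^{-1}=P^{-1/2}Q$, so $M=\bigl[\gamma W(\alpha\Ib+W^\top W)^{-1/2}\bigr]Q$ is exactly the matrix of Lemma~\ref{lm:weight_parameterization} post-multiplied by an orthogonal matrix, and since right multiplication by an orthogonal matrix preserves singular values, $\normt{M}=\normt{\gamma W(\alpha\Ib+W^\top W)^{-1/2}}\le\gamma$.

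The step to handle carefully is not deep but is where mistakes happen: pinning down which of $R^\top R$, $RR^\top$ equals $\alpha\Ib+W^\top W$ and whether $M$ must be formed with $R^{-1}$ or $R^{-\top}$, so that the telescoping $R^{-\top}(R^\top R)R^{-1}=\Ib$ (or its mirror image) genuinely holds; and noting the invertibility of $R$ so that all inverses above exist. With those in place, each finish is a two-line argument, so I expect no real obstacle.
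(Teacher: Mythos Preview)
Your proof is correct. The paper itself does not supply a proof of this lemma: it merely quotes it from the companion LDLT-construction paper, so there is no in-paper argument to compare against.

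One small sharpening of your convention discussion. The lemma as stated, with $RR^\top=\alpha\Ib+W^\top W$ and $M=\gamma WR^{-1}$, is true without swapping $R^{-1}$ for $R^{-\top}$. Under that convention the clean telescoping simply happens on the other side:
\[
MM^\top=\gamma^2\, W R^{-1}R^{-\top}W^\top=\gamma^2\, W(RR^\top)^{-1}W^\top=\gamma^2\, W(\alpha\Ib+W^\top W)^{-1}W^\top,
\]
which by the paper's Lemma~\ref{lm:reduced_woodbury} equals $\gamma^2\bigl(\Ib_m-\alpha(\alpha\Ib_m+WW^\top)^{-1}\bigr)\preceq\gamma^2\Ib_m$, giving $\normt{M}\le\gamma$. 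Your orthogonal-factor reduction also goes through verbatim under this convention: with $P=\alpha\Ib+W^\top W$ and $Q=P^{1/2}R^{-1}$ one has $QQ^\top=P^{1/2}(RR^\top)^{-1}P^{1/2}=\Ib$, hence $Q$ is orthogonal and $M=\bigl[\gamma W(\alpha\Ib+W^\top W)^{-1/2}\bigr]Q$, reducing to Lemma~\ref{lm:weight_parameterization}. So both of your finishes are valid; under $RR^\top=P$ you should target $MM^\top$ (respectively $QQ^\top$) rather than $M^\top M$ (respectively $Q^\top Q$), instead of amending the statement.
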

We start by defining the forward pass and ensuring that the marginal distribution converges for a linear network. We want the variance to be stable across inputs and weight matrix dimensions. To perform this, we want to compute the marginal variance.

We assume an input vector $x \in \R^n$, whose elements are $x_i \sim \Nl(0, 1)$. We define the parameterized weight matrix as $W_0 \in \R^{m \times n}$ whose elements will be initialized as $W_0 \sim \Nl(0, \sigma^2)$. In turn, we wish to compute a linear network defined as,
\begin{align*}
    \yb = \bar W x = \gamma W_0(\alpha \Ib + W_0^\top W_0)^{-\frac{1}{2}} x = \gamma \underbrace{W_0 R^{-1}}_{\tilde W} x
\end{align*}
the variable, $\Var{y_i}$. To do this, we start by computing the total variance of $\yb$, the expectation square of the output vector,
\begin{align*}
    \E{\normt{\yb}^2} &= \E{\yb^\top \yb} = \E{\trace(\yb \yb^\top)}  = \trace \E{\yb \yb^\top} = \trace \Sigma_y.
\end{align*}
Given that each element is sampled from a central IID distribution, we have that,
\begin{align*}
    \E{\normt{\yb}^2} = \E{y_1^2 + \cdots + y_m^2} = \sum_{i=1}^m \E{y_i^2} = \sum_{i=1}^m \Var{y_i}
\end{align*}
such that,
\begin{align*}
    \Var{y}=\Var{y_1} = \cdots = \Var{y_m},
\end{align*}
which thus means that, 
\begin{align*}
    \trace \Sigma_y =  \sum_{i=1}^m \Var{y_i} =  m \Var{y}.
\end{align*}
We want to bound,
\begin{align*}
    \Var{y} = \frac{1}{m} \trace \Sigma_y,
\end{align*}
based on $\sigma$ such that $\Var{y} = 1$.

The marginal covariance $\Sigma_y$ was defined as,
\begin{align*}
    \Sigma_y = \Eb_{W_0}\left[\Cov{\yb | W_0} \right],
\end{align*}
the covariance is defined as, for fixed $\alpha$ and $\gamma$
\begin{align*}
    \Cov{\yb | W_0}&= \E{yy^\top | W_0},\\
    &= \E{\gamma^2 \tilde{W} x x^\top \tilde{W}^\top  | W_0}, \\
    &= \gamma^2 \tilde{W} \E{x x^\top} \tilde{W}^\top, \\
    &= \gamma^2 \tilde{W}\tilde{W}^\top, \\
    &= \gamma^2 W_0(\alpha \Ib_n + W_0^\top W_0)^{-1} W_0^\top.
\end{align*}
Using the Woodbury lemma, which states that,
\begin{theorem} \label{thm:woodbury_lemma}
    The Woodbury matrix identity, otherwise called the matrix inversion lemma, states that \cite{Woodbury1950InvertingMatrices}
    \begin{align*}
        (A + UCV)^{-1} = A^{-1} - A^{-1}  U(C^{-1} + V A^{-1}  U)^{-1}VA^{-1},  
    \end{align*}
    where $A \in \R^{n \times n}, U\in \R^{n \times k}, C\in \R^{k \times k}$ and $V\in \R^{k \times n}$ and $A$ is invertible.
\end{theorem}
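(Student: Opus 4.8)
\emph{Proof proposal.} This is a purely algebraic identity, so the plan is to verify it by direct substitution rather than to derive it from scratch. First I would make explicit the invertibility hypotheses that the statement silently requires: besides $A$ being invertible, the right-hand side is only well defined when $C$ is invertible and when the capacitance matrix $C^{-1} + V A^{-1} U \in \R^{k \times k}$ is invertible, so I would add these as standing assumptions (both hold in our application, where $A = \alpha I$ and $C = I$).

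Next, set $B := A^{-1} - A^{-1} U (C^{-1} + V A^{-1} U)^{-1} V A^{-1}$ and compute the product $(A + UCV) B$. Distributing and using $(A+UCV)A^{-1} = I + UCVA^{-1}$ together with $(A+UCV)A^{-1}U = U + UCVA^{-1}U$ gives
\begin{align*}
(A + UCV) B = I + UCVA^{-1} - \bigl(U + UCVA^{-1}U\bigr)\bigl(C^{-1} + VA^{-1}U\bigr)^{-1} V A^{-1}.
\end{align*}
The key step is the factorization $U + UCVA^{-1}U = UC\bigl(C^{-1} + VA^{-1}U\bigr)$: substituting it, the capacitance matrix cancels its own inverse, and the two surviving $UCVA^{-1}$ terms telescope, leaving $(A+UCV)B = I$. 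Since all matrices are square and finite-dimensional, a right inverse is automatically a left inverse, so $B = (A+UCV)^{-1}$; alternatively I would run the symmetric computation for $B(A+UCV)$ to keep the argument self-contained.

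An equivalent and arguably more illuminating route, which I would include as a short remark, is the block-matrix derivation: form the $(n+k)\times(n+k)$ matrix $\left(\begin{smallmatrix} A & -U \\ V & C^{-1}\end{smallmatrix}\right)$, invert it by block elimination in the two possible orders (eliminating the $A$ block first, versus eliminating the $C^{-1}$ block first), and equate the two resulting expressions for the $(1,1)$ block of the inverse; this produces the Woodbury identity with no guessing and simultaneously exhibits $C^{-1}+VA^{-1}U$ as the relevant Schur complement. I do not expect a genuine obstacle here: the only care needed is (i) stating the extra invertibility assumptions precisely and (ii) not mishandling the order of factors in the noncommutative bookkeeping of the telescoping step — neither is deep.
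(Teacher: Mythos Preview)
Your proposal is correct and is the standard direct-verification proof of the Woodbury identity; the factorization $U + UCVA^{-1}U = UC(C^{-1}+VA^{-1}U)$ is exactly the crux, and your attention to the extra invertibility hypotheses on $C$ and on the capacitance matrix $C^{-1}+VA^{-1}U$ is well placed, since the statement as written omits them.

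That said, the paper does not actually prove this theorem at all: it is stated as a named identity with a citation to Woodbury (1950) and then immediately specialized in the following lemma to the case $A=\alpha I$, $C=\alpha^{-1}I$, $U=\sqrt{\alpha}\,M$, $V=\sqrt{\alpha}\,M^\top$. So there is nothing in the paper to compare against --- your argument supplies a self-contained proof where the paper simply imports the result. Your alternative block-matrix/Schur-complement remark is also standard and would be a fine addition, but for the paper's purposes even the direct check is already more than what is provided.
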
 

Where we simplify the system to,
\begin{lemma} \label{lm:reduced_woodbury}
    The following matrix inverses are equivalent
    \begin{align*}
        \alpha (\alpha \Ib + MM^\top)^{-1} = \Ib -  M(\alpha \Ib + M^\top M)^{-1}M^\top,
    \end{align*}
    given through the Woodbury matrix identity in Theorem \ref{thm:woodbury_lemma}. By setting $A = \alpha \Ib$, $C = \frac{1}{\alpha}\Ib$, $U= \sqrt{\alpha} M$ and $V = \sqrt{\alpha}M^\top$
\end{lemma}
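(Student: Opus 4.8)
The plan is to obtain the identity by a direct substitution into the Woodbury matrix identity of Theorem~\ref{thm:woodbury_lemma}, exactly along the lines indicated in the statement. First I would record that the hypotheses of Theorem~\ref{thm:woodbury_lemma} are met: with $M \in \R^{m\times n}$, taking $\Ib = \Ib_m$ on the left we set $A = \alpha\Ib_m$, which is invertible with $A^{-1} = \alpha^{-1}\Ib_m$ since $\alpha > 0$; then $U = \sqrt{\alpha}\,M \in \R^{m\times n}$, $C = \alpha^{-1}\Ib_n$ with $C^{-1} = \alpha\Ib_n$, and $V = \sqrt{\alpha}\,M^\top \in \R^{n\times m}$, so all the dimension and invertibility requirements hold.

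Next I would simply evaluate the two sides of the Woodbury formula under this choice. On the left, the perturbation collapses to $UCV = \sqrt{\alpha}\,M\cdot\alpha^{-1}\Ib_n\cdot\sqrt{\alpha}\,M^\top = MM^\top$, so $A + UCV = \alpha\Ib_m + MM^\top$ and the left-hand side is $(\alpha\Ib_m + MM^\top)^{-1}$. On the right, the inner ``capacitance'' factor becomes $C^{-1} + VA^{-1}U = \alpha\Ib_n + \sqrt{\alpha}\,M^\top\cdot\alpha^{-1}\Ib_m\cdot\sqrt{\alpha}\,M = \alpha\Ib_n + M^\top M$, and the surrounding scalars multiply to $\alpha^{-1}\cdot\sqrt{\alpha}\cdot\sqrt{\alpha}\cdot\alpha^{-1} = \alpha^{-1}$, giving
\[
(\alpha\Ib_m + MM^\top)^{-1} = \alpha^{-1}\Ib_m - \alpha^{-1} M(\alpha\Ib_n + M^\top M)^{-1}M^\top .
\]
Multiplying through by $\alpha$ produces the asserted equality.

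I do not expect any genuine obstacle; the only points needing a little care are keeping track of the two different identity sizes ($\Ib_m$ versus $\Ib_n$) and checking the cancellation of the $\sqrt{\alpha}$ and $\alpha^{-1}$ factors, both of which are routine. As an independent sanity check that avoids invoking Theorem~\ref{thm:woodbury_lemma}, I would multiply the claimed identity on the left by $(\alpha\Ib_m + MM^\top)$ and use the push-through identity $(\alpha\Ib_m + MM^\top)M = M(\alpha\Ib_n + M^\top M)$, i.e. $(\alpha\Ib_m + MM^\top)^{-1}M = M(\alpha\Ib_n + M^\top M)^{-1}$; this reduces the right-hand side to $(\alpha\Ib_m + MM^\top) - MM^\top = \alpha\Ib_m$, matching the left-hand side and confirming the result.
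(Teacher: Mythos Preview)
Your proposal is correct and follows exactly the route the paper takes: the paper's ``proof'' is nothing more than the instruction to substitute $A=\alpha\Ib$, $C=\alpha^{-1}\Ib$, $U=\sqrt{\alpha}M$, $V=\sqrt{\alpha}M^\top$ into Theorem~\ref{thm:woodbury_lemma}, and you carry out precisely this substitution with the bookkeeping made explicit. The additional push-through sanity check you include is a nice extra but goes beyond what the paper provides.
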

As such we can transform the $\Cov{\yb | W_0}$ in the following way,
\begin{align*}
    \gamma^2 \Ib_m - \Cov{\yb | W_0} &= \gamma^2 \left(\Ib_m - W_0(\alpha \Ib_n + W_0^\top W_0)^{-1} W_0^\top\right), \nonumber \\
    &= \gamma^2 \alpha (\alpha \Ib_m + W_0 W_0^\top)^{-1}, \\
   \Cov{\yb | W_0} &=    \gamma^2 \Ib_m - \gamma^2 \alpha (\alpha \Ib_m + W_0 W_0^\top)^{-1}.
\end{align*}
Substituting back into the marginal covariance $\Sigma_y$, we have,
\begin{align*}
    \Sigma_y &= \Eb_{W_0}\left[\Cov{\yb | W_0} \right], \\
    &= \Eb_{W_0}\left[  \gamma^2 \Ib_m - \gamma^2 \alpha (\alpha \Ib_m + W_0 W_0^\top)^{-1} \right], \\
    &= \gamma^2 \Ib_m - \gamma^2 \alpha \Eb_{W_0}\left[  (\alpha \Ib_m + W_0 W_0^\top)^{-1} \right].
\end{align*}
Given that we wish to compute,
\begin{align}
    \Var{y} &= \frac{1}{m} \trace \Sigma_y, \\
    &= \frac{\gamma^2 }{m}(m - \alpha \trace \E{ (\alpha \Ib_m + W_0 W_0^\top)^{-1}}) ,\nonumber \\
    &= \frac{\gamma^2 }{m}(m - \alpha \E{ \trace ((\alpha \Ib_m + W_0 W_0^\top)^{-1})}) .\label{eqn:singular_variance}
\end{align}
Meaning that to derive the overall variance $\Var{y}$ we need to compute $\Eb_{W_0}\left[ (\alpha \Ib_m + W_0 W_0^\top)^{-1} \right]$. We set $S = W_0 W_0^\top$. Given that $W_0 \sim \Nl(0, \sigma^2)$ the resulting matrix $S$ is a $m\times m$ symmetric positive semi-definite matrix, following $S \sim \Wl_m(n, \sigma^2 \Ib_m)$, which is a central Wishart distribution with degrees of freedom $n$.
\begin{lemma}
For any Hermitian matrix $S \succeq 0$ and scalar $\alpha > 0$ 
\begin{align*}
    (\alpha \Ib + S)^{-1} &= \sum_{k = 0}^{\infty} (-1)^{k} \alpha^{-(k + 1)} S^{k}, \quad \text{whenever } \rho(S) < \alpha,
\end{align*}
and, regardless of $\rho(S)$, the exact Laplace representation,
\begin{align*}
     (\alpha \Ib + S)^{-1} &= \int_{0}^{\infty} e^{-\alpha t} e^{- S t} dt,
\end{align*}
always holds.
\end{lemma}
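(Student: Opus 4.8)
The plan is to diagonalize $S$ and reduce both claims to elementary scalar facts about its eigenvalues. Since $S$ is Hermitian and positive semidefinite, write $S = U\Lambda U^{*}$ with $U$ unitary and $\Lambda = \diag(\lambda_1,\dots,\lambda_m)$, $\lambda_i \ge 0$. Then $\alpha\Ib + S$, every power $S^k$, and every analytic function of $S$ appearing in the statement are simultaneously diagonalized by $U$, so it suffices to verify each identity coordinatewise, i.e.\ for the scalar $\alpha + \lambda > 0$ with $\lambda = \lambda_i$.

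For the Neumann series, factor $(\alpha\Ib + S)^{-1} = \alpha^{-1}(\Ib + \alpha^{-1}S)^{-1}$. The hypothesis $\rho(S) < \alpha$ means $\lambda_i/\alpha < 1$ for every $i$, so on the $i$th eigenspace the series $\sum_{k\ge0}(-1)^k\alpha^{-(k+1)}\lambda_i^{k} = \alpha^{-1}\sum_{k\ge 0}(-\lambda_i/\alpha)^k$ is a convergent geometric series with sum $\alpha^{-1}(1+\lambda_i/\alpha)^{-1} = (\alpha+\lambda_i)^{-1}$. Because $U$ is a fixed unitary, the partial sums $\sum_{k=0}^{N}(-1)^k\alpha^{-(k+1)}S^k = U\big(\sum_{k=0}^N (-1)^k\alpha^{-(k+1)}\Lambda^k\big)U^{*}$ converge in operator norm (indeed $\|S^k\|\le\rho(S)^k$) to $U\diag\big((\alpha+\lambda_i)^{-1}\big)U^{*} = (\alpha\Ib+S)^{-1}$, which is the first identity. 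The hypothesis $\rho(S)<\alpha$ is exactly what forces the geometric series to converge, so this identity is necessarily conditional.

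For the Laplace representation, note that $\alpha\Ib$ and $S$ commute, hence $e^{-(\alpha\Ib+S)t} = e^{-\alpha t}e^{-St}$, and $\|e^{-\alpha t}e^{-St}\|\le e^{-\alpha t}$ since $S\succeq 0$ gives $\|e^{-St}\|\le 1$; thus the matrix-valued integral converges absolutely. Conjugating by $U$ and integrating each diagonal entry, $\int_0^\infty e^{-\alpha t}e^{-\lambda_i t}\,dt = (\alpha+\lambda_i)^{-1}$ because $\alpha+\lambda_i>0$, so $\int_0^\infty e^{-\alpha t}e^{-St}\,dt = U\diag\big((\alpha+\lambda_i)^{-1}\big)U^{*} = (\alpha\Ib+S)^{-1}$. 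Equivalently, one may observe that $\frac{d}{dt}\big[-(\alpha\Ib+S)^{-1}e^{-(\alpha\Ib+S)t}\big] = e^{-(\alpha\Ib+S)t}$ and use $e^{-(\alpha\Ib+S)t}\to 0$ as $t\to\infty$, valid since $\alpha\Ib+S\succ 0$.

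There is no genuine obstacle in this lemma: the only points requiring care are the finite-dimensional bookkeeping that lets the limit and the integral pass through the unitary conjugation, and noting the asymmetry between the two claims — the power series needs $\rho(S)<\alpha$ for convergence, whereas the exponential integrand decays for every $\alpha>0$ regardless of $\rho(S)$, so the Laplace identity holds unconditionally.
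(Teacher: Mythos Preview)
Your proof is correct and follows essentially the same diagonalization approach as the paper: reduce both identities to their scalar counterparts on each eigenvalue of $S$. The only organizational difference is that the paper first establishes the Laplace representation and then obtains the Neumann series by expanding $e^{-St}$ under the integral and integrating term by term, whereas you prove the two identities independently via the geometric series and the scalar Laplace integral; both routes are equally valid and rest on the same diagonalization idea.
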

\begin{proof}
    Let $S$ be diagonalizable, which is true for every real symmetric matrix,
    \begin{align*}
        S = Q \diag(\lambda_1, \cdots, \lambda_n) Q^\top,
    \end{align*}
    then,  
    \begin{align*}
       (\alpha \Ib + S)^{-1} = Q  \diag\left(\frac{1}{\alpha + \lambda_1}, \cdots, \frac{1}{\alpha + \lambda_n}\right) Q^\top,
    \end{align*}
    the eigenvalues can then be represented through the Laplace integral,
    \begin{align*}
        \frac{1}{\alpha + \lambda} = \int_{0}^{\infty} e^{-\alpha t} e^{-\lambda t}dt,
    \end{align*}
    which can then be represented in matrix form as,
    \begin{align*}
        (\alpha \Ib + S)^{-1} = \int_{0}^{\infty} e^{-\alpha t} e^{-S t}dt,
    \end{align*}
    Now expanding the matrix exponential in its power series, we get that,
    \begin{align*}
        e^{-St} = \sum_{k = 0}^{\infty} \frac{(-t)^k}{k!} S^k,
    \end{align*}
    substituted in,
    \begin{align*}
        (\alpha \Ib + S)^{-1} &= \int_{0}^{\infty} e^{-\alpha t} e^{-S t}dt, \\
        &= \int_{0}^{\infty} e^{-\alpha t} \sum_{k = 0}^{\infty} \frac{(-t)^k}{k!} S^k dt, \\
        &= \sum_{k = 0}^{\infty} \left( \int_{0}^{\infty} e^{-\alpha t}  \frac{(-t)^k}{k!} dt \right) S^k ,\\
        &= \sum_{k = 0}^{\infty} (-1)^k \alpha^{-(k+ 1)} S^k,
    \end{align*}
    which only converges iff $\normt{S} < \alpha$.     Given that $\normt{S} \leq  \sigma^2 (\sqrt{m} + \sqrt{n} + t)^2$, for every $t \ge 0$ with probability at least $1- 2 e^{-t^2/2}$ \cite[Proposition 2.4]{Rudelson2010Non-asymptoticValues}. The moment assumption is thus only valid for a $\sigma^2 < \alpha / (\sqrt{m} + \sqrt{n} + t)^2 $.
\end{proof}
We perform the calculations using the unconditional integral form,
\begin{align*}
    \E{ \trace (\alpha \Ib_m + S)^{-1}} &= \int_{0}^{\infty} e^{-\alpha t} \sum_{k = 0}^{\infty} \frac{(-t)^k}{k!} \E{\trace(S^k)} dt,
\end{align*}
This representation holds for any positive-semidefinite $S$ and allows us to expand the matrix exponential $e^{-St}$ in terms of moments, exchanging the integral and the sum under expectation. The problem thus reduces to computing the matrix moments $\E{\trace(S^k)}$ for the Wishart matrix $S$.

\section{Expectation}

\subsection{James' expectation formula for zonal polynomials}
The key result we use \cite{James1964DistributionsSamples} is the expectation formula for zonal polynomials of a central Wishart:
\begin{theorem}
\label{thm:james}
If $X\sim\Wl_m(n,\Sigma)$ (central Wishart), then for any partition $\kappa$ with $|\kappa|=k$, \cite[Eqn. 24]{James1964DistributionsSamples},
\begin{equation}\label{eq:james}
\E{C_\kappa(X)} = 2^{k}\,\Big(\tfrac{n}{2}\Big)_\kappa C_\kappa(\Sigma).
\end{equation}
Define $\kappa$ as a partition $\kappa=(\kappa_1\ge\kappa_2\ge\cdots\ge0)$ with $|\kappa|:=\sum_i\kappa_i$. Let $C_\kappa(\cdot)$ be the zonal polynomial indexed by $\kappa$, \cite{James1964DistributionsSamples}, and $a$ be the scalar from which the multivariate (generalized) Pochhammer symbol is defined as,
\begin{align*}
(a)_\kappa &= \prod_{i=1}^{\ell(\kappa)}\Big(a-\tfrac{1}{2}(i-1)\Big)_{\kappa_i}, \\
(a)_k &:= a(a+1)\cdots(a+k-1).
\end{align*}
\end{theorem}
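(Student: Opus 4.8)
The plan is to prove Theorem~\ref{thm:james} by integrating the zonal polynomial $C_\kappa$ directly against the Wishart density and reducing everything to a single ``master integral'' over the cone of positive-definite matrices. \emph{Step 1: standardize.} Assume first $n\ge m$, so that $X\sim\Wl_m(n,\Sigma)$ admits the density $f(X)\propto |X|^{(n-m-1)/2}\operatorname{etr}\!\left(-\tfrac12\Sigma^{-1}X\right)$ on $\{X\succ 0\}$ with normalizing constant $2^{nm/2}|\Sigma|^{n/2}\Gamma_m(n/2)$, where $\Gamma_m$ is the multivariate gamma function. Substituting $X=2\,\Sigma^{1/2}Y\Sigma^{1/2}$ (Jacobian $2^{m(m+1)/2}|\Sigma|^{(m+1)/2}$), the cyclic invariance of the trace turns $\operatorname{etr}(-\tfrac12\Sigma^{-1}X)$ into $\operatorname{etr}(-Y)$, the relation $|X|=2^m|\Sigma|\,|Y|$ makes the powers of $2$ and $|\Sigma|$ cancel the normalizer down to $\Gamma_m(n/2)^{-1}$, and — since zonal polynomials are homogeneous of degree $k$ and invariant under $AB\mapsto BA$ — we have $C_\kappa(X)=C_\kappa(2Y\Sigma)=2^k C_\kappa(Y\Sigma)$. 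This gives
\begin{align*}
\E{C_\kappa(X)}=\frac{2^k}{\Gamma_m(n/2)}\int_{Y\succ 0}\operatorname{etr}(-Y)\,|Y|^{(n-m-1)/2}\,C_\kappa(Y\Sigma)\,dY.
\end{align*}

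\emph{Step 2: apply the master integral.} Invoke Constantine's zonal-polynomial integral
\begin{align*}
\int_{Y\succ 0}\operatorname{etr}(-Y)\,|Y|^{a-(m+1)/2}\,C_\kappa(YZ)\,dY=\Gamma_m(a)\,(a)_\kappa\,C_\kappa(Z),\qquad \operatorname{Re}(a)>\tfrac{m-1}{2},
\end{align*}
with $a=n/2$ and $Z=\Sigma$, which immediately yields $\E{C_\kappa(X)}=2^k(n/2)_\kappa C_\kappa(\Sigma)$, i.e.\ \eqref{eq:james}. The hypothesis $n\ge m$ is then removed by analytic continuation: both sides are polynomials in $n$ of equal degree that agree on every sufficiently large integer, hence on all $n$.

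\emph{The main obstacle} is the master integral itself, which requires the structural theory of zonal polynomials rather than elementary calculus. I would prove it by orthogonal averaging: the substitution $Y\mapsto HYH^\top$ for $H\in O(m)$, followed by integration over $O(m)$, shows the left-hand side is an $O(m)$-invariant homogeneous polynomial of degree $k$ in the eigenvalues of $Z$, hence expands as $\sum_{|\lambda|=k}c_\lambda C_\lambda(Z)$; the reproducing identity $\int_{O(m)}C_\lambda(AHBH^\top)\,dH=C_\lambda(A)C_\lambda(B)/C_\lambda(\Ib_m)$ combined with $\operatorname{etr}(TY)=\sum_j\frac1{j!}\sum_{|\lambda|=j}C_\lambda(TY)$ then isolates the coefficient of $C_\kappa$ and forces $c_\lambda=0$ for $\lambda\neq\kappa$; and specializing $Z=\Ib_m$ collapses the integral to a product of one-dimensional Gamma integrals in the eigenvalues of $Y$, evaluating the constant to $\Gamma_m(a)(a)_\kappa$ via $(a)_\kappa=\prod_i\bigl(a-\tfrac{i-1}{2}\bigr)_{\kappa_i}$. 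Everything outside this step is routine bookkeeping with Jacobians and the homogeneity and similarity-invariance of $C_\kappa$.
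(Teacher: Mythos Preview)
The paper does not actually prove Theorem~\ref{thm:james}; it is quoted as a known result from James (1964) and used without argument. There is therefore no in-paper proof to compare against.

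That said, your proposal is correct and is essentially the classical derivation (Constantine 1963; Muirhead, \emph{Aspects of Multivariate Statistical Theory}, Theorem~7.2.7). The bookkeeping in Step~1 checks out: after the change of variable $X=2\Sigma^{1/2}Y\Sigma^{1/2}$ the powers of $2$ and $|\Sigma|$ cancel exactly, and the identity $C_\kappa(\Sigma^{1/2}Y\Sigma^{1/2})=C_\kappa(Y\Sigma)$ follows from the eigenvalue invariance of zonal polynomials. The master integral you invoke is the right tool, and your sketch of its proof via orthogonal averaging plus the reproducing identity $\int_{O(m)}C_\kappa(AHBH^\top)\,dH=C_\kappa(A)C_\kappa(B)/C_\kappa(\Ib_m)$ is the standard route. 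The analytic-continuation step to lift the constraint $n\ge m$ is also valid: writing $X=\sum_{j=1}^n w_jw_j^\top$ with the $w_j$ i.i.d.\ and expanding $C_\kappa(X)$ as a polynomial in the entries shows $\E{C_\kappa(X)}$ is a polynomial in $n$, as is $2^k(n/2)_\kappa C_\kappa(\Sigma)$, so agreement for all large integers forces agreement for all $n$. One small point you might add for completeness is that when $\ell(\kappa)>m$ both sides vanish trivially, so the hypothesis $\operatorname{Re}(a)>(m-1)/2$ in the master integral is never an obstruction for partitions that actually occur.
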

\noindent In particular, when $\Sigma=\sigma^2 I_m$ we use the homogeneity of the zonal polynomials:
\begin{align*}
C_\kappa(\Sigma) = \sigma^{2k}\, C_\kappa(I_m),
\end{align*}
so \eqref{eq:james} becomes
\begin{equation}\label{eq:james_sigma_identity}
\E{C_\kappa(X)} = 2^{k}\,\Big(\tfrac{n}{2}\Big)_\kappa \sigma^{2k}\, C_\kappa(I_m).
\end{equation}
\subsection{Expansion of power sums in the zonal basis}
Any symmetric polynomial in the eigenvalues of a symmetric (or real symmetric positive definite) matrix can uniquely be expanded as a linear combination in the zonal polynomial basis \cite[Theorem 7.2.5]{Muirhead2008AspectsTheory}. In particular, for fixed $k$ there exist constants $a_{k,\kappa}$ (depending only on the partition $\kappa$ and $m$) such that
\begin{equation}\label{eq:trace_power_zonal_expansion}
\tr(S^k) = \sum_{|\kappa|=k} a_{k,\kappa} C_\kappa(S).
\end{equation}
The coefficients $a_{k,\kappa}$ are determined by the change of basis from power-sum symmetric polynomials to zonal polynomials \cite{Macdonald1995SymmetricPolynomials}; closed forms exist for small $k$ and can be computed algebraically for larger $k$, \cite{James1964DistributionsSamples}.

\subsection{Expectation of $\tr(S^k)$ via James}
Taking expectation in \eqref{eq:trace_power_zonal_expansion} and using Theorem \ref{thm:james} yields the exact representation
\begin{align}\label{eq:Ek_traceSk_general}
\E{\tr(S^k)} &= \sum_{|\kappa|=k} a_{k,\kappa} \E{C_\kappa(S)}, \nonumber \\
 &= \sum_{|\kappa|=k} a_{k,\kappa} 2^{k}(\tfrac{n}{2})_\kappa C_\kappa(\Sigma).
\end{align}
For $\Sigma=\sigma^2 I_m$, we obtain the simplified form
\begin{equation}\label{eq:Ek_traceSk_sigmaI}
\E{\tr(S^k)} = 2^{k}\sigma^{2k} \sum_{|\kappa|=k} a_{k,\kappa} \Big(\tfrac{n}{2}\Big)_\kappa C_\kappa(\Ib_m).
\end{equation}
This expectation representation in \eqref{eq:Ek_traceSk_sigmaI} is exact. The difficulty in applying it for general $k$ is the computation of the basis coefficients $a_{k,\kappa}$ and the constants $C_\kappa(\Ib_m)$. For small $k$, these can be computed explicitly; for large $k$, combinatorial formulas for zonal polynomials and tables in the literature can be used.

Which gives the exact final $\Var{y}$ as,
\begin{align}
\Var{y} &= \frac{\gamma^2 }{m}\left(m - \alpha \int_{0}^{\infty} e^{-\alpha t} \sum_{k = 0}^{\infty} \frac{(-t)^k}{k!} \E{\trace(S^k)} dt \right), \nonumber \\
        &= \gamma^2 - \frac{\gamma^2 \alpha}{m} \int_{0}^{\infty} e^{-\alpha t} \sum_{k = 0}^{\infty} \frac{(-t)^k}{k!} \E{\trace(S^k)} dt, \nonumber \\
        &= \gamma^2 - \frac{\gamma^2 \alpha}{m} \int_{0}^{\infty} e^{-\alpha t} \nonumber \\ &\qquad \sum_{k = 0}^{\infty} \frac{(-2\sigma^2 t)^k}{k!} \sum_{|\kappa|=k} a_{k,\kappa} \Big(\tfrac{n}{2}\Big)_\kappa C_\kappa(\Ib_m) dt .\label{eqn:full_exact_solution}
\end{align}
\subsection{Series Expansion}

Given that we do not need the exact computation of the variance, we can evaluate the matrix exponential expansion for a small number of known Wishart moments, $k = 0, 1, 2$,
\begin{itemize}
    \item $k = 0$: $\E{\tr(S^0)} = \E{\tr(\Ib_m)} = m$.
    \item $k = 1$: $\E{\tr(S^1)} = \E{\tr(S)}$, for the first moments we know a Wishart distribution that $\E{S} = n \Sigma = n \sigma^2 \Ib_m$, which in turn gives, 
    \begin{align} \label{eqn:explicit_moment_1}
        \E{\tr(S)} = n m \sigma^2.
    \end{align}
    \item $k = 2$: $\E{\tr(S^2)}$, for the second moment of the central Wishart distribution \cite[Eqn. 4]{1982SamplesDistributions},
    \begin{align*}
        \Cov{S_{ij}, S_{kl}} = n(\sigma_{ik} \sigma_{jl} + \sigma_{il} \sigma_{jk}),
    \end{align*}
    here, given that $\Sigma = \sigma^{2} \Ib_m$, we thus have that $\E{S_{ij}} = n \sigma^2 \delta_{ij}$ and $\Cov{S_{ij}, S_{kl}} = n \sigma^4 (\delta_{ik} \delta_{jl} + \delta_{il} \delta_{jk})$, this implies that,
    \begin{align*}
        \E{S_{ij}S_{kl}} = \Cov{S_{ij}, S_{kl}} + \E{S_{ij}}\E{kl},
    \end{align*}
    we compute,
    \begin{align}
        \E{\tr(S^2)} &= \sum_{i, j} S_{ij}S_{ji}, \nonumber  \\
        &= \sum_{i, j} (n \sigma^4 (\sigma_{ij} \sigma_{ji} + \sigma_{ii} \sigma_{jj}) + n^2 \sigma^4 (\delta_{ij} \delta_{ji}), \nonumber \\
        &= n \sigma^4 (m + m^2) + n^2 \sigma^4 m, \nonumber \\
        &= \sigma^4 nm (m + 1 + n), \label{eqn:explicit_moment_2}
    \end{align}
\end{itemize}
\subsection{Series expansion algorithm} \label{sec:series_expansion}

While the lower moments up to $k = 2$ were derived manually, we need algorithms to handle higher orders. To do this, we make use of Isserlis \cite{Isserlis1918OnVariables} or Wick theorem \cite{Wick1950TheMatrix}. Let $W_0 \in \R^{m \times n}$ have IID $W_{ij} \sim \Nl(0, \sigma^2)$ and $S = W_0 W_0^\top$, we can expand the trace,
\begin{align*}
    \tr(S^k) = \sum_{i_1, \cdots, i_k = 1}^m \sum_{i_1, \cdots, i_k = 1}^n \prod_{t = 1}^k W_{i_t, j_t} W_{i_{t + 1}, j_{t}}
\end{align*}
where $i_{k + 1} := i_{1}$. This formulation implies that there are $2k$ Gaussian factors. By the Wick theorem \cite{Wick1950TheMatrix}, the expectation of a product of $2k$ zero-mean Gaussian is equal to the sum over all perfect matchings, which cover every vertex of the graph \cite{Godsil2001AlgebraicTheory}, of the $2k$ positions of the product of the pairwise covariances. For our variables,
\begin{align*}
    \E{W_{a,p}W_{b, q}} = \sigma^2 \delta_{ab} \delta_{pq}
\end{align*}
Hence, any matched pair imposes equality between the corresponding row and column indices. Concretely, a matching $m \in \mathcal{M}_{2k}$ induces a system of equalities among the row indices $\{i_t\}$ and column indices $\{j_t\}$.

The combinatorial effect of each constraint is that,
\begin{itemize}
    \item The factor $\sigma^{2k}$ is derived from $k$ covariances,
    \item a factor $n^{a}$, where $a$ represents the number of independent connected components from the column indices of the matching,
    \item a factor $m^{b}$, where $a$ represents the number of independent connected components from the row indices of the matching,
\end{itemize}
thus each matching $m \in \mathcal{M}_{2k}$ contributes,
\begin{align*}
    \sigma^{2k} n^{a (m)} m^{a(m)}
\end{align*}
and thus summing over all perfect matchings, we have that,
\begin{align*}
    \E{\tr(S^k)} = \sigma^{2k} \sum_{m \in \mathcal{M}_{2k}} n^{a (m)} m^{a(m)},
\end{align*}
where $\mathcal{M}_{2k}$ is the perfect matchings of the $2k$ positions. It should be noted that $|\mathcal{M}_{2k}| = (2k - 1)!!$, which grows super-exponentially, making this computation practical only for small $k$. Grouping identical monomials returns the form,
\begin{align*}
    \E{\tr(S^k)} = \sigma^{2k} \sum_{a, b \ge 0} C_{a,b}^{(k)} n^a m^b,
\end{align*}
where $C_{a,b}\in \N_{+} $ represents the integer monomial counts for the degrees $a$ and $b$. The higher moments can thus be computed and are derived below. Each coefficient $ C_{a,b}$ is represented in the matrix associated for vectors $\boldsymbol{n} = (n^k, \cdots, n)^\top$ and $\boldsymbol{m} = (m, \cdots, m^k)^\top$, where each element of the matrix $\boldsymbol{n} \boldsymbol{m}^\top$ represents the coefficients $C^{(k)}$. This is an asymptotic series, not a convergent one.
\begin{itemize}
    \item $k =1, 2$, which matches Equations \eqref{eqn:explicit_moment_1} and \eqref{eqn:explicit_moment_2} respectively,
    \begin{align*}
        \begin{bmatrix}1\end{bmatrix}, \qquad
        \begin{bmatrix}1 & 0\\1 & 1\end{bmatrix}
    \end{align*}
    \item $k =3, 4$ and $k= 5$ respectively computed using Wick's theorem,
    \begin{align*}
    \setlength\arraycolsep{3pt}
        \begin{bmatrix}1 & 0 & 0\\3 & 3 & 0\\4 & 3 & 1\end{bmatrix}, \quad
        \begin{bmatrix}1 & 0 & 0 & 0\\6 & 6 & 0 & 0\\21 & 17 & 6 & 0\\20 & 21 & 6 & 1\end{bmatrix},  \quad
        \begin{bmatrix}1 & 0 & 0 & 0 & 0\\10 & 10 & 0 & 0 & 0\\65 & 55 & 20 & 0 & 0\\160 & 175 & 55 & 10 & 0\\148 & 160 & 65 & 10 & 1\end{bmatrix}
    \end{align*}
    \item Higher order moments up to $k=10$ are available in Appendix \ref{sec:moments}.
\end{itemize}
To demonstrate that the estimated layer variance for small $\sigma^2$ is equal to the actual layer variance propagation, the actual layer variance propagation was sampled using Monte Carlo estimation for the mean. It was compared with the computed variance estimate based on the $k = \{1, \cdots, 10\}$ moments derived above. The errors are graphed in Figures \ref{fig:var_1}, \ref{fig:var_10} and \ref{fig:var_100}. 
\begin{figure}[ht!]
    \centering
    \includegraphics[width=1\linewidth]{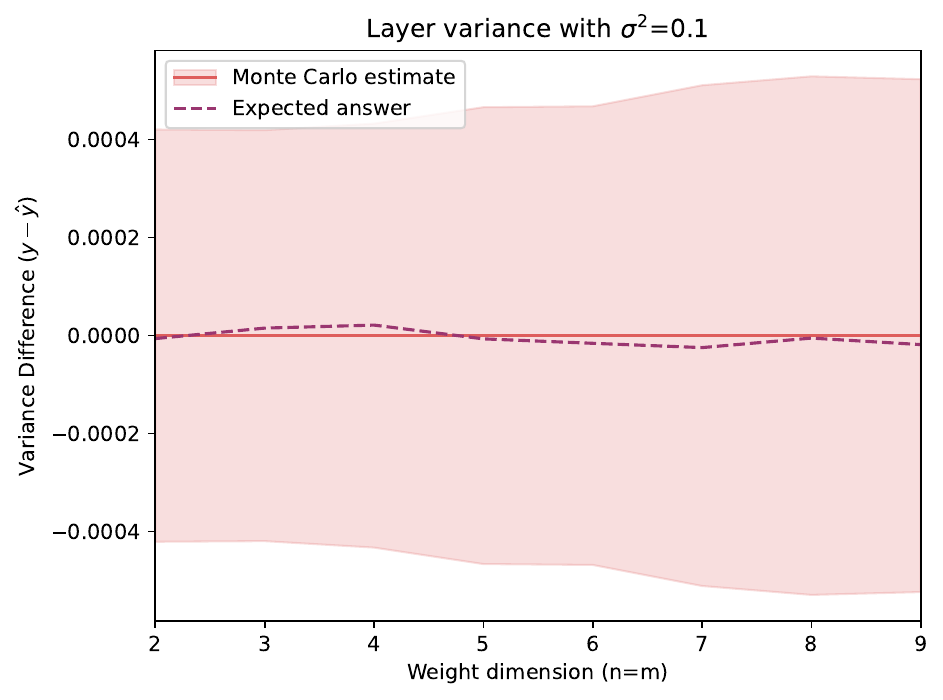}
    \caption{Variance difference estimation for weight parameterization sizes from 2 to 9}
        \label{fig:var_1}
\end{figure}
\begin{figure}[ht!]
        \centering
        \includegraphics[width=1\linewidth]{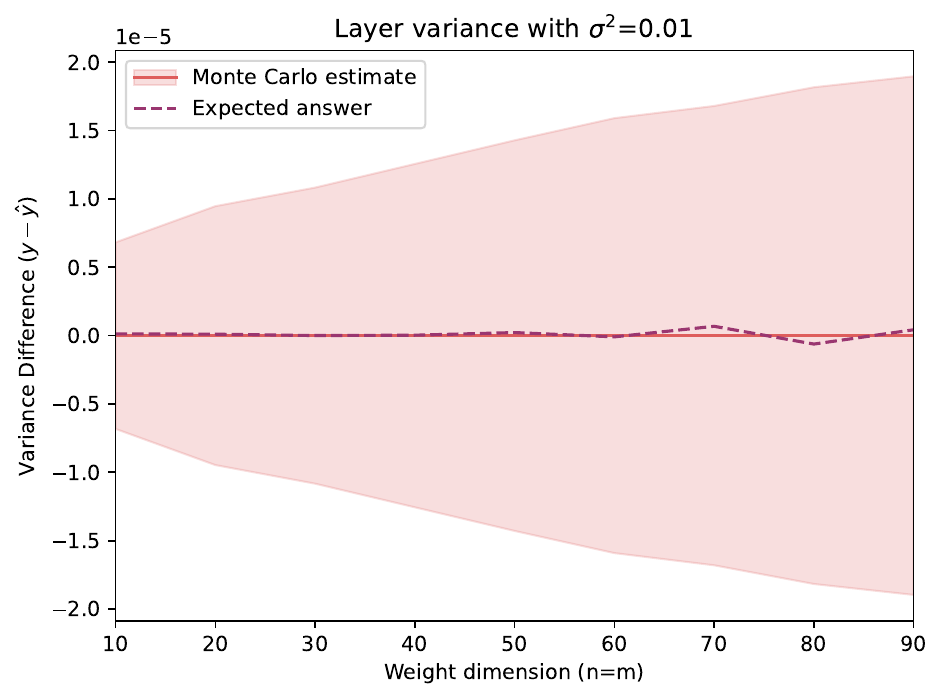}
        \caption{Variance difference estimation for weight parameterization sizes from 10 to 90}
        \label{fig:var_10}
    \end{figure}
\begin{figure}[ht!]
        \centering
        \includegraphics[width=1\linewidth]{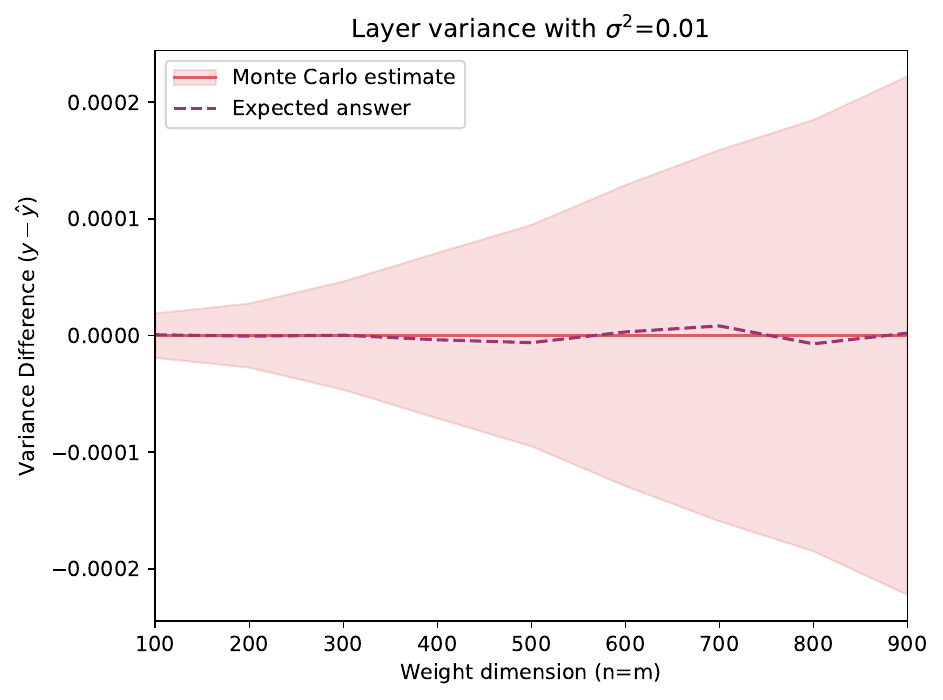}
        \caption{Variance difference estimation for weight parameterization sizes from 100 to 900}
        \label{fig:var_100}
    \end{figure}
Based on the empirical errors, we can confirm that the truncated moment-variance estimate is accurate and adequately represents the parametrization's actual variance for small $\sigma^2$. The error bars in the figures represent one standard deviation.

\subsection{Variance scaling}

The Laplace integration's matrix exponential term $e^{-St}$ is lower bounded by the smallest eigenvalue, $\lambda_{\min}$, of the matrix $S$. Given that $S \sim \Wl_m(n, \sigma^2 \Ib_m)$ is equivalent to $\sigma^2 S \sim\Wl_m(n, \Ib_m)$ this implies that $\lambda_{\min} \propto \sigma^2$ and consequently,
\begin{align*}
    \lim_{\sigma^2 \to \infty} \E{ \trace (\alpha \Ib_m + S)^{-1}} = 0
\end{align*}
which implies that the marginal variance,
\begin{align*}
   \lim_{\sigma^2 \to \infty} \Var{y} &= \gamma^2 - \frac{\alpha \gamma^2}{m} \lim_{\sigma^2 \to \infty}\E{ \trace (\alpha \Ib_m + S)^{-1}} = \gamma^2
\end{align*}
Thus, to achieve a unit-variance layer, we have to ensure that $\sigma^2$ is very large. This is illustrated through Figures \ref{fig:variance_std_linear} and \ref{fig:variance_std_nonlinear}, using $\alpha = \gamma = 1$.
\begin{remark}[Gradient Trade-off] \label{rm:gradient}
    As the initialization variance of $W_0$ increases to satisfy $\Var{y} \approx 1$, the expectation of the singular values of $W_0$ grows large. In the limit of large $W_0$, the term $(\alpha + W_{0}^\top W_0)$ dominates, and the normalized matrix saturates towards the boundary of the Lipschitz manifold. This saturation causes the gradients $\frac{\partial M}{\partial W_{0}}$ to vanish, potentially hindering optimization despite improved forward signal propagation.
\end{remark}
\begin{figure}[ht!]
    \centering
    \includegraphics[width=1\linewidth]{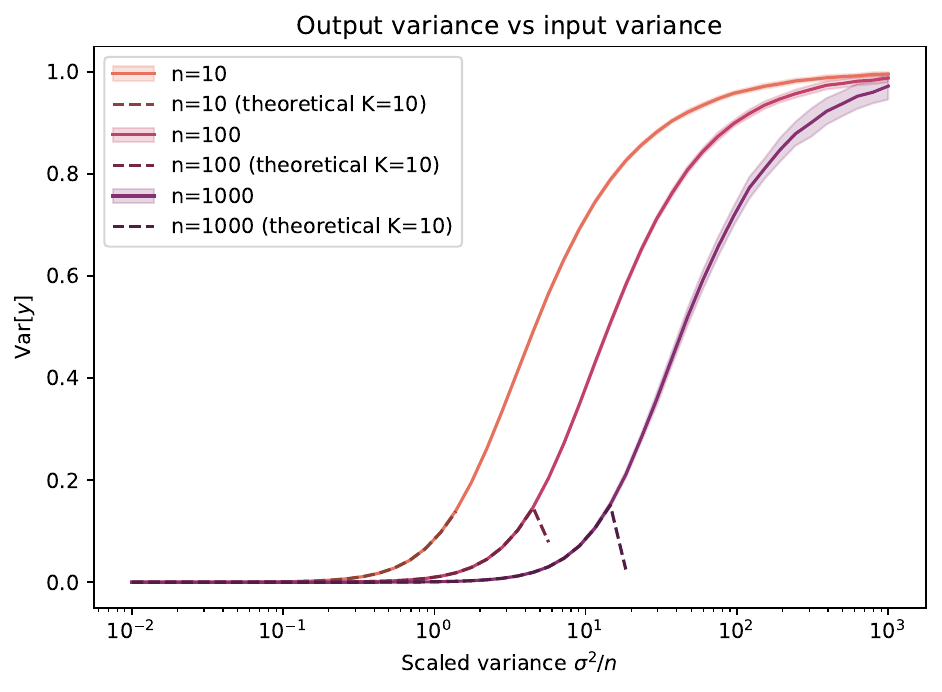}
    \caption{Relationship between the output variance and the normalized input variance using $n$ scaling at multiple $W_0 \in \R^{n \times n}$ dimensions with $\alpha = \gamma = 1$}
    \label{fig:variance_std_linear}
\end{figure}
\begin{figure}[ht!]
    \centering
    \includegraphics[width=1\linewidth]{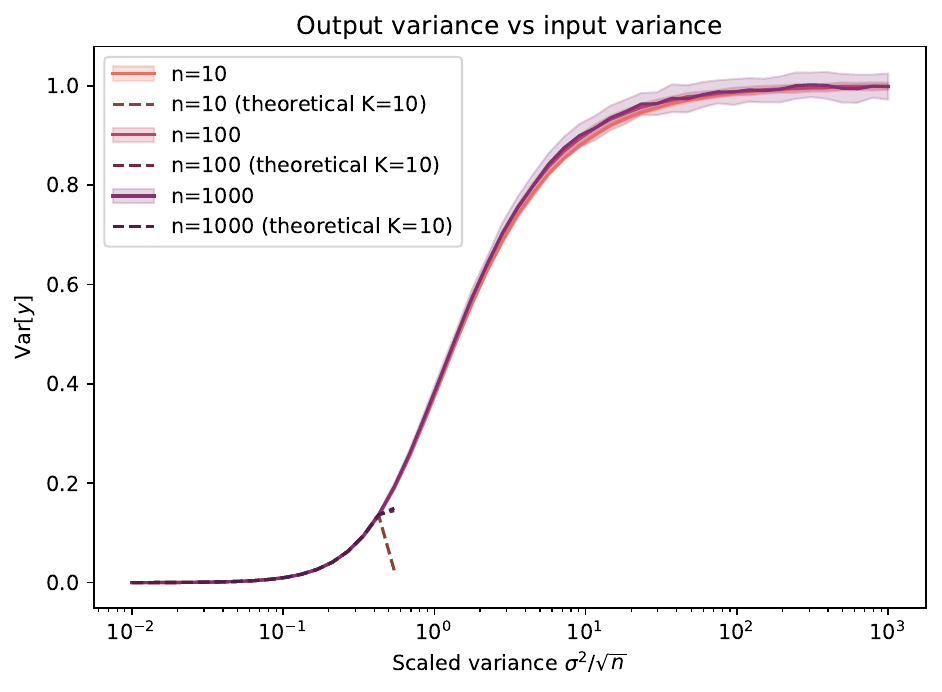}
    \caption{Relationship between the output variance and the normalized input variance using $\sqrt{n}$ scaling at multiple $W_0 \in \R^{n \times n}$ dimensions with $\alpha = \gamma = 1$}
    \label{fig:variance_std_nonlinear}
\end{figure}
The theoretical values plotted are based on the truncated expectations that were derived in Section \ref{sec:series_expansion}. The figures thus also illustrate the region where the approximation in Section \ref{sec:series_expansion} is valid for $\sigma^2$. The Figure \ref{fig:variance_std_nonlinear} also illustrates that the output variance scales with $\sqrt{n}$. Thus, to obtain an output variance close to one, $10 / \sqrt{n}$ would be an appropriate scaling factor. The standard initialization factor would be $1 / \sqrt{n}$, which would result in an output variance of $\approx 0.41$, while the new variance would bring it close to $\approx 0.9$.

The proof that the output variance is dependent on $\sigma^2$ and that it is theoretically possible to achieve near unit output is a huge advantage compared to previously developed Lipschitz layer parameterizations such as SLL \cite{Araujo2023} or AOL \cite{Prach2022}; where, \cite{Juston20251-LipschitzProblem} demonstrates that for the parameterization employed previously the output variance is independent on $\sigma^2$ meaning that the only control one could employ would be to change the weight dimensions, which in practice is not practical.

\subsection{Backward-propagation}

Backward propagation was very similar to forward propagation. Instead of the previous forward-propagation equation, the layer was rearranged to:
\begin{align}
    \Delta \boldsymbol{x_l} &= \Tilde{W}_l  \Delta \boldsymbol{y_l} . 
\end{align}
Where $\Delta \boldsymbol{x_l}$ and $\Delta \boldsymbol{y_l}$ denote the gradients $\frac{\partial \mathcal{L}}{\partial \boldsymbol{x}}$ and $\frac{\partial \mathcal{L}}{\partial \boldsymbol{y}}$  respectively. This results in the gradient variance from Equation \eqref{eqn:full_exact_solution}, but with gradient-based dimension parameters and no bias term.

\section{Experiments}

\subsection{Data set}

To validate, we use the Higgs data set generated by Monte Carlo simulations. The first 21 features are kinematic properties measured by particle detectors in the particle accelerator. The last seven features are high-level features derived by physicists to help discriminate between Higgs Boson particles and non-Higgs particles \cite{Baldi2014SearchingLearning}.

We used the sub-sampled dataset with 100k training data points and 20k test data points. We use stratified k-fold cross-validation with a size of 5, a batch size of 64, an initial learning rate of $1e-3$ with a Cosine Annealing learning rate scheduler, which reduces to a learning rate of $1e-4$, 

\subsection{Training Setup}
The training was performed across cross-validation folds, with weight initialization scales $\alpha = \{1,2,4,8,10\}$ and network depths $L = \{5, 10, 15,20,25,30\}$, and finally with optimizers SGD and AdamW \cite{Loshchilov2017DecoupledRegularization} with default parameters. This generated 300 different training configurations.

We tested the $LDL^\top$ linear network \cite{Juston2025LDLTConstruction}, with a constant width of 512 neurons and ELU activation functions \cite{Clevert2015} between each layer. Using 4 A6000 RTX NVIDIA with 48GB of VRAM, using 2 AMD EPYC 7713 64-Core Processors for a cumulative of 328.637 GPU compute hours.

\subsection{Results}

We log the validation AUC during network training and illustrate it with respect to the optimizer and the network depth or initialization scale, respectively, in Figure \ref{fig:val_auc}.
%
%
%
\begin{figure}
    \centering
\begin{subfigure}[b]{1\linewidth}
    \centering
    \includegraphics[width=1\linewidth]{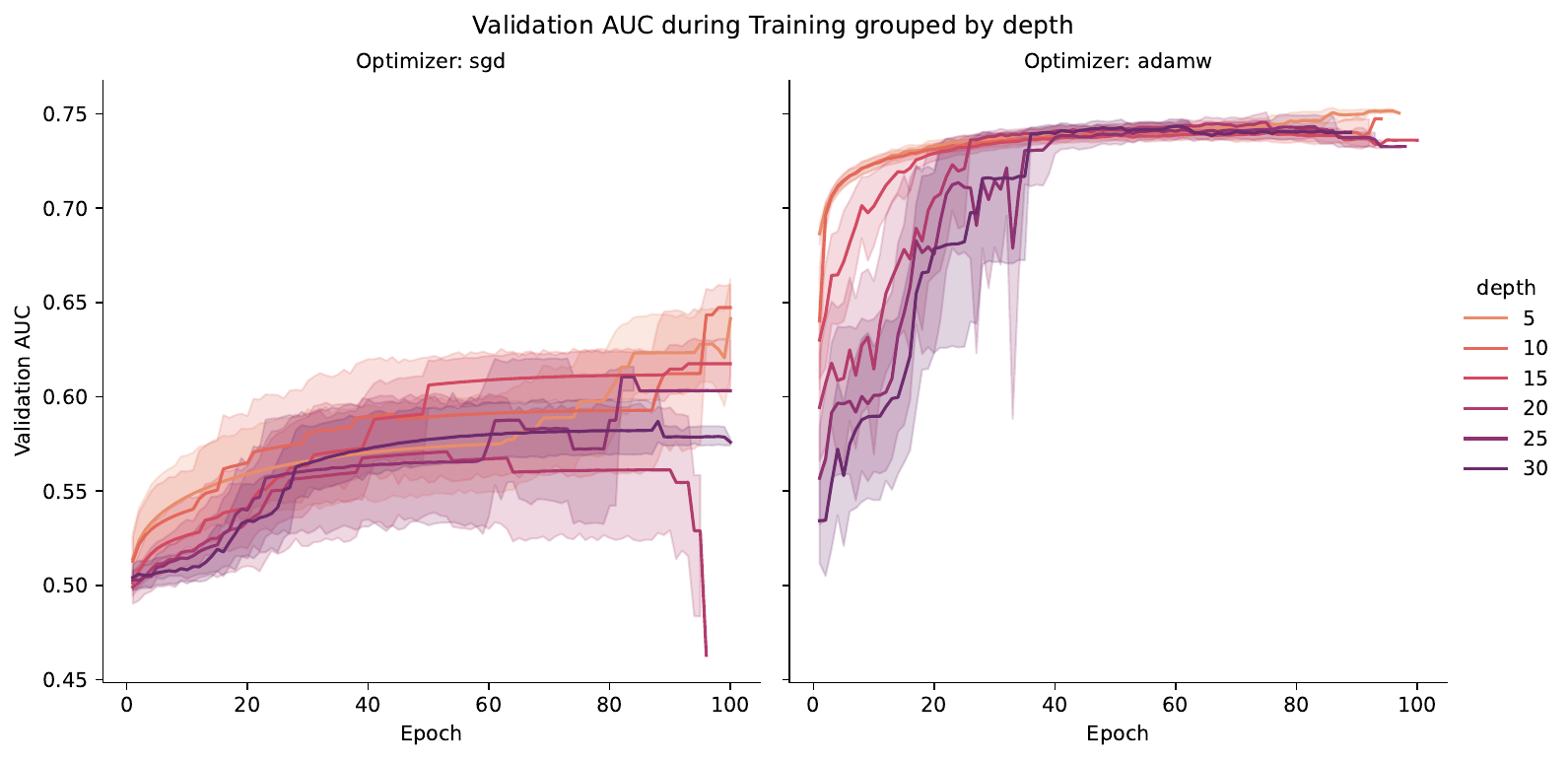}
    \caption{When observing the test AUC during training, we observe that there does not seem to be a clear distinction in performance between layer depth, with the training using the AdamW optimizer consistently performing better than SGD.}
    \label{fig:val_auc_depth}
\end{subfigure}
\begin{subfigure}[b]{1\linewidth}
    \centering
    \includegraphics[width=1\linewidth]{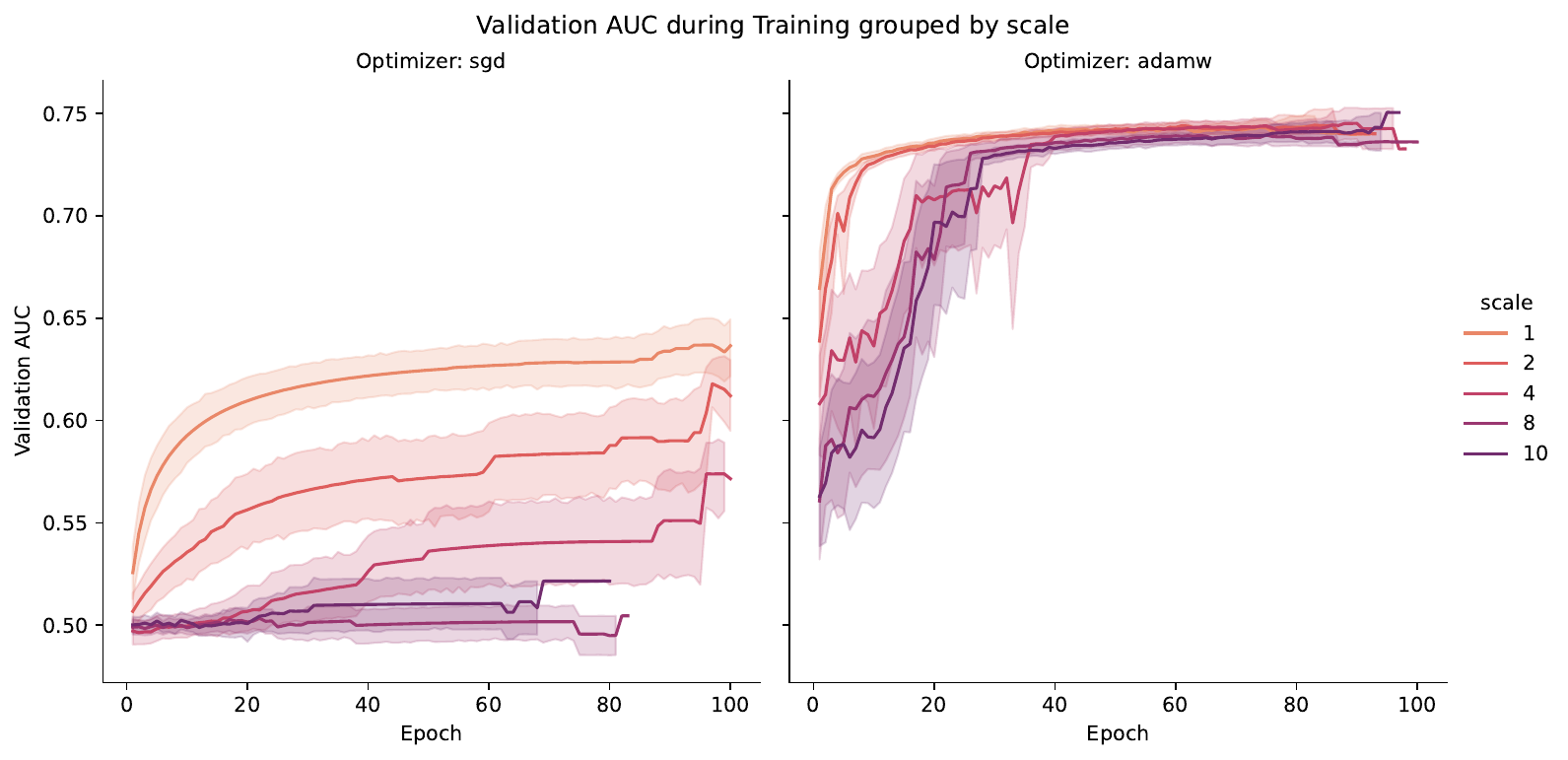}
    \caption{When observing the test AUC during training, we see a mixed message: the initialization scale and the test AUC are both increasing. The AdamW optimizer consistently outperforms SGD and shows no relationship with the initialization scale; however, SGD shows a clear response: the smaller the scale, the better the test AUC.}
    \label{fig:val_auc_scale}
\end{subfigure}
    \caption{Comparison of the validation AUC training dynamics with respect to the network's depth and initialization scale, respectively, between the SGD and the AdamW optimizers. The confidence intervals are computed using bootstrapping of 1000 samples. }
    \label{fig:val_auc}
\end{figure}
From the validation AUC plots, we notice that, relative to the depth hyperparameter, there is no expected decay, with the larger networks showing little to no decrease in accuracy as depth increases. In addition, in contrast to the theory above, we would also assume that the larger the initialization scale $\alpha$, the larger the AUC should be; however, as illustrated in Figure \ref{fig:val_auc_scale}, that is not the case. In training with SGD, we observe the opposite relationship: the larger the initialization scale, the lower the expected validation AUC. In contrast, during AdamW optimizer training, there is not only no relationship between the initialization scheme's scale and training, but it also seems to follow the same trend as the network's depth. The invariance to the network's initialization makes sense due to Adam's family normalization term in its parameter update step \cite{Loshchilov2017DecoupledRegularization, Kingma2014Adam:Optimization}, 
\begin{align*}
    \theta_t \gets \theta_t - \gamma \frac{\hat{m}_t}{ \sqrt{\hat{v}_t} + \epsilon},
\end{align*}
where $\hat{m}_t \propto g_t$ and $\hat{v}_t \propto g_t^2$ represent the gradient's first and second moments, respectively. 
As noted in Remark \ref{rm:gradient}, higher initialization scale $(\sigma^2 \propto 10 / \sqrt{n}$) saturate the normalization term, resulting in gradients $\nabla_{W_0} \mathcal{L}$ whose magnitudes scale inversely with $\sigma^2$. SDG, which updates $W_{t + 1} \gets W_{t + 1} - \eta \nabla \mathcal{L}$, cannot overcome these smaller gradients, leading to the ``lazy regime" failure observed in Figure \ref{fig:hessian_vs_regime_correlation}. In contrast, AdamW rescales the update by the inverse of the gradient's second moment (approximating the Hessian diagonal), effectively canceling out the scaling factor introduced by the initialization. This allows AdamW to traverse the ``rich regime" regardless of the initialization scale.
\begin{figure}[ht!]
    \centering
    \includegraphics[width=1\linewidth]{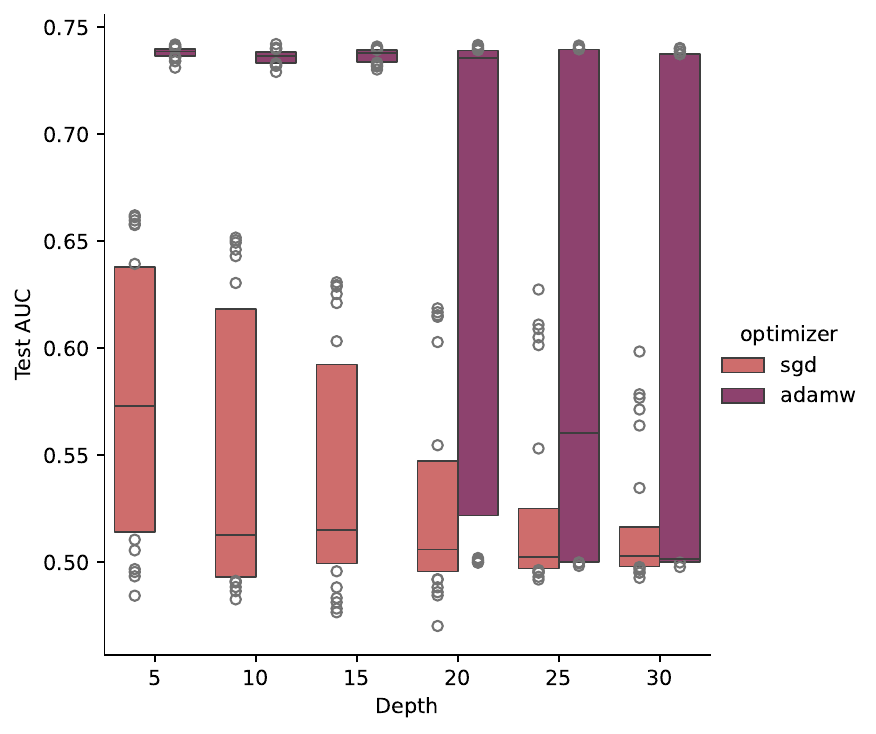}
    \caption{Comparison plots between two optimizers, AdamW and SGD, of the final test data set's AUC grouped by the network's depth for training the $LDL^\top$ network on the Higgs Boson classification dataset. As illustrated for both optimizers, performance decreases with increasing network depth.}
    \label{fig:test_auc_depth_box}
\end{figure}
\begin{figure}[ht!]
    \centering
    \includegraphics[width=1\linewidth]{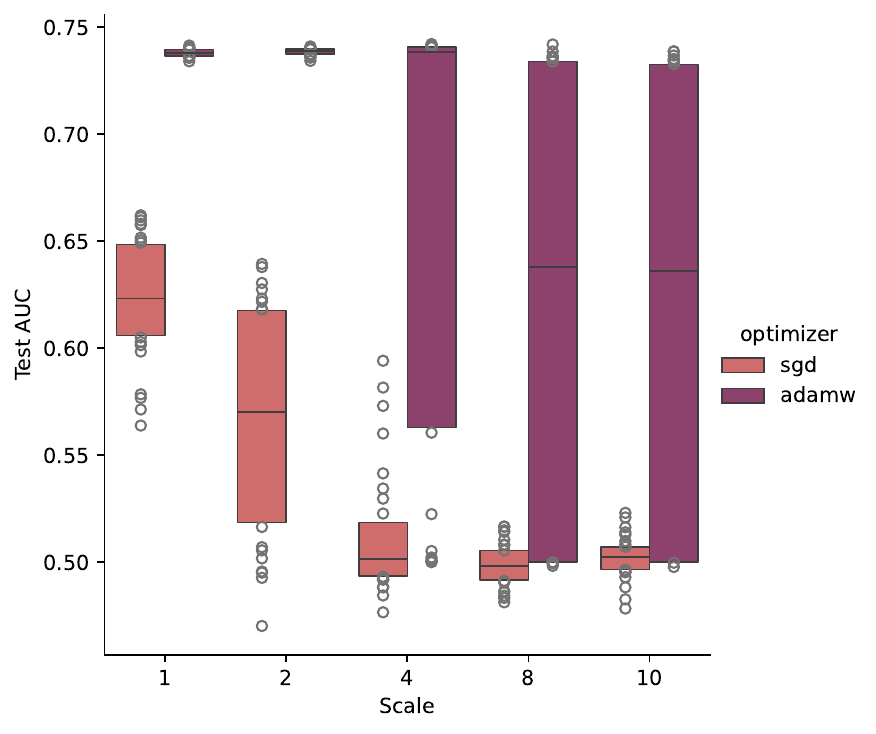}
    \caption{Comparison plots between two optimizers, AdamW and SGD, of the final test data set's AUC grouped by the network's scale for training the $LDL^\top$ network on the Higgs Boson classification dataset. As illustrated for both optimizers, the performance decreases as the initialization scale increases.}
    \label{fig:test_auc_scale_box}
\end{figure}
When observing the test AUC box plots in Figures \ref{fig:test_auc_depth_box} and \ref{fig:test_auc_scale_box} with respect to the network's depth and scale respectively, we notice the opposite trend, where as the network depth increase, the performance decrease for both the SGD and the AdamW optimizers; however, similarly to the validation AUC, the larger the initialization scale, the worse the test AUC ended up being. This negative trend is reflected in both the AdamW and SGD optimizers' final test AUC. This is especially prevalent in network training using the SGD optimizer, which effectively works no better than random chance, with an AUC of 0.5.

To try to explain this, we look at the final Hessian trace of the loss, 
\begin{align*}
    \sum_{i,j}\frac{\partial^2 \mathcal{L}}{\partial W_{ij}^2},
\end{align*}
with respect to the maximum Parameter Movement, the distance relative to the initial parameter value,
\begin{align*}
        \arg \max_\tau \frac{\|W_\tau - W_0 \|}{\|W_0\|}
\end{align*}
illustrated in Figure \ref{fig:hessian_vs_regime_correlation}, which illustrates that the AdamW optimizer operates in the rich regime during training, where the parameters change a lot, while the SGD optimizer usually operates in the smaller lazy regime, the regime where the network weights do not change much from their initialization. We can also notice that when the network achieved a larger Hessian trace and a sharper minimum, the networks consistently achieved larger test AUC, as indicated by the smooth color transition illustrated in Figure \ref{fig:hessian_vs_regime_correlation}.  
\begin{figure}[ht!]
    \centering
    \includegraphics[width=1\linewidth]{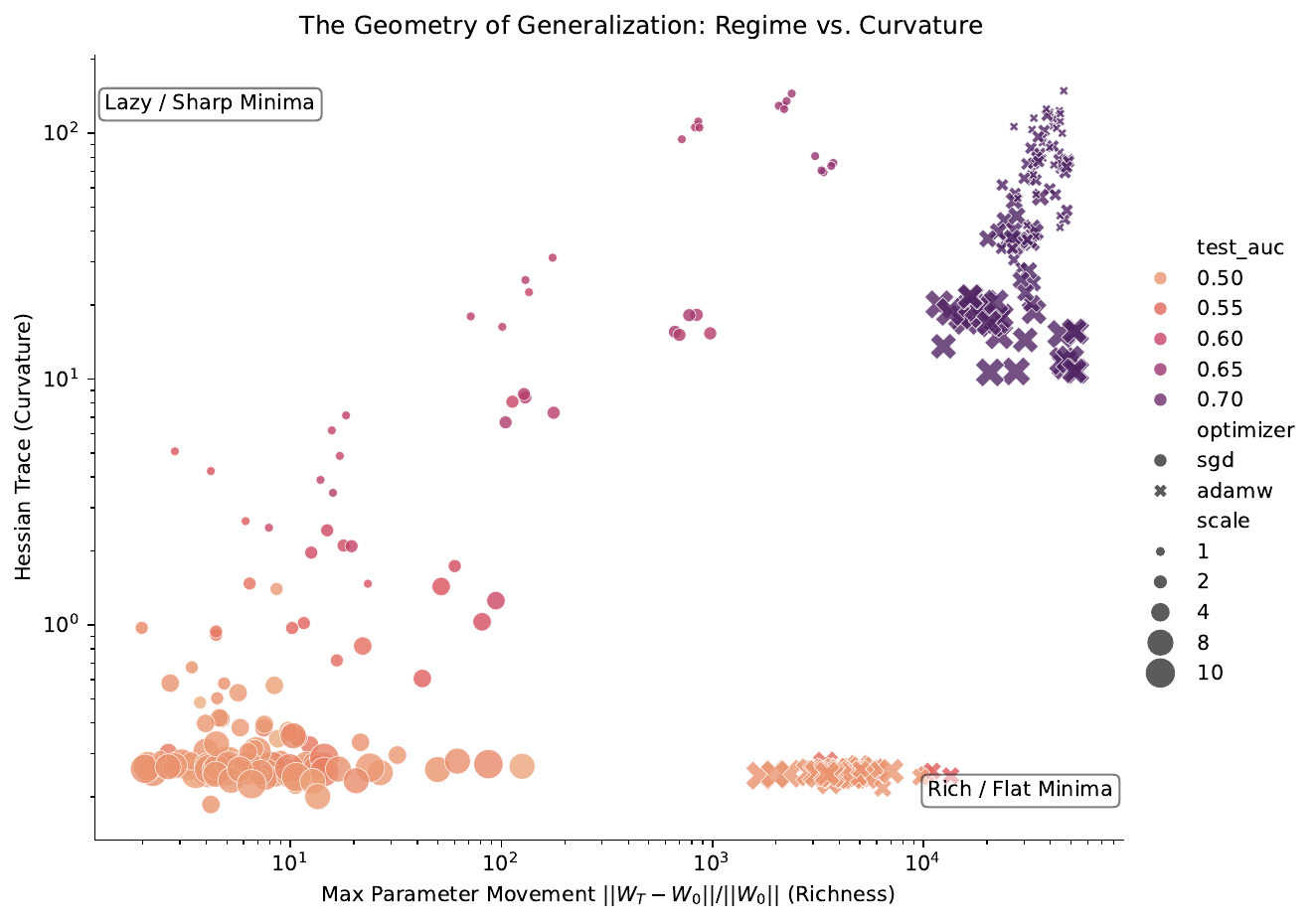}
    \caption{Hessian trace relative to the parameter movement with respect to the test AUC, the optimizer, and the initialization scale. AdamW operates in the richness regime, while SGD operates in a lazier regime. When runs operated in the sharp-rich regimes, the highest test AUCs were achieved.}
    \label{fig:hessian_vs_regime_correlation}
\end{figure}
We thus explore the factors that achieve this sharp Hessian trace, where Figure \ref{fig:hessian_trace_scale} demonstrates an apparent relationship between the Hessian's trace and the initialization scale. The larger the initialization scale, the smaller the Hessian trace, and vice versa for both optimizers. This thus implies that larger initialization scales lead the network to a flatter minimum, while smaller initialization scales lead the network to a more curved, sharper minimum.

Prior works has been worked on proposing that flat minima exhibit stronger generalization ability, due to their invariance to the parameter sensitivity, \cite{Hochreiter1997FlatMinima, JastrzEbskiFindingSGD, Liu2025ALoss}; It was also demonstrated that SGD naturally gravitates towards flatter regions, which could potentially explain why in Figure \ref{fig:hessian_trace_depth}, while AdamW becomes sharper as the network size increases, SGD does the opposite and becomes flatter. However, this relationship between flat local minima and generalization ability has been called into question by others, who show that the opposite holds \cite{Zhang2021WhyNetworks, Dinh2017SharpNets}, underscoring the need for further analysis. In this network, we notice that sharper minima seem to yield better test AUC.
%
%
%
\begin{figure}
    \centering
    \begin{subfigure}[b]{1\linewidth}
    \centering
    \includegraphics[width=1\linewidth]{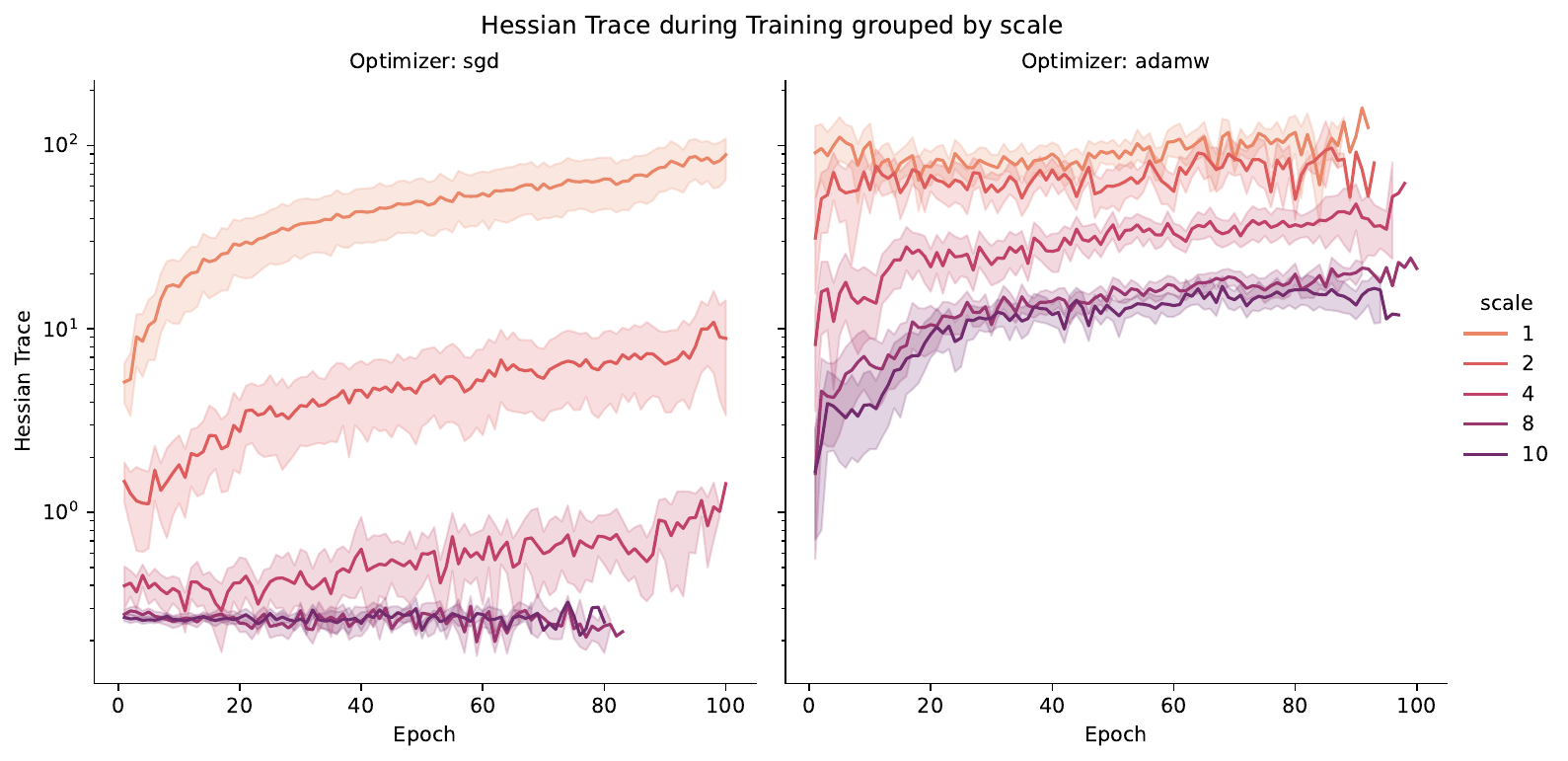}
    \caption{Illustrating the relationship that the larger the initialization scale, the smaller the Hessian trace, thus implying a more stable local minima.}
    \label{fig:hessian_trace_scale}        
    \end{subfigure}
    \begin{subfigure}[b]{1\linewidth}
    \centering
    \includegraphics[width=1\linewidth]{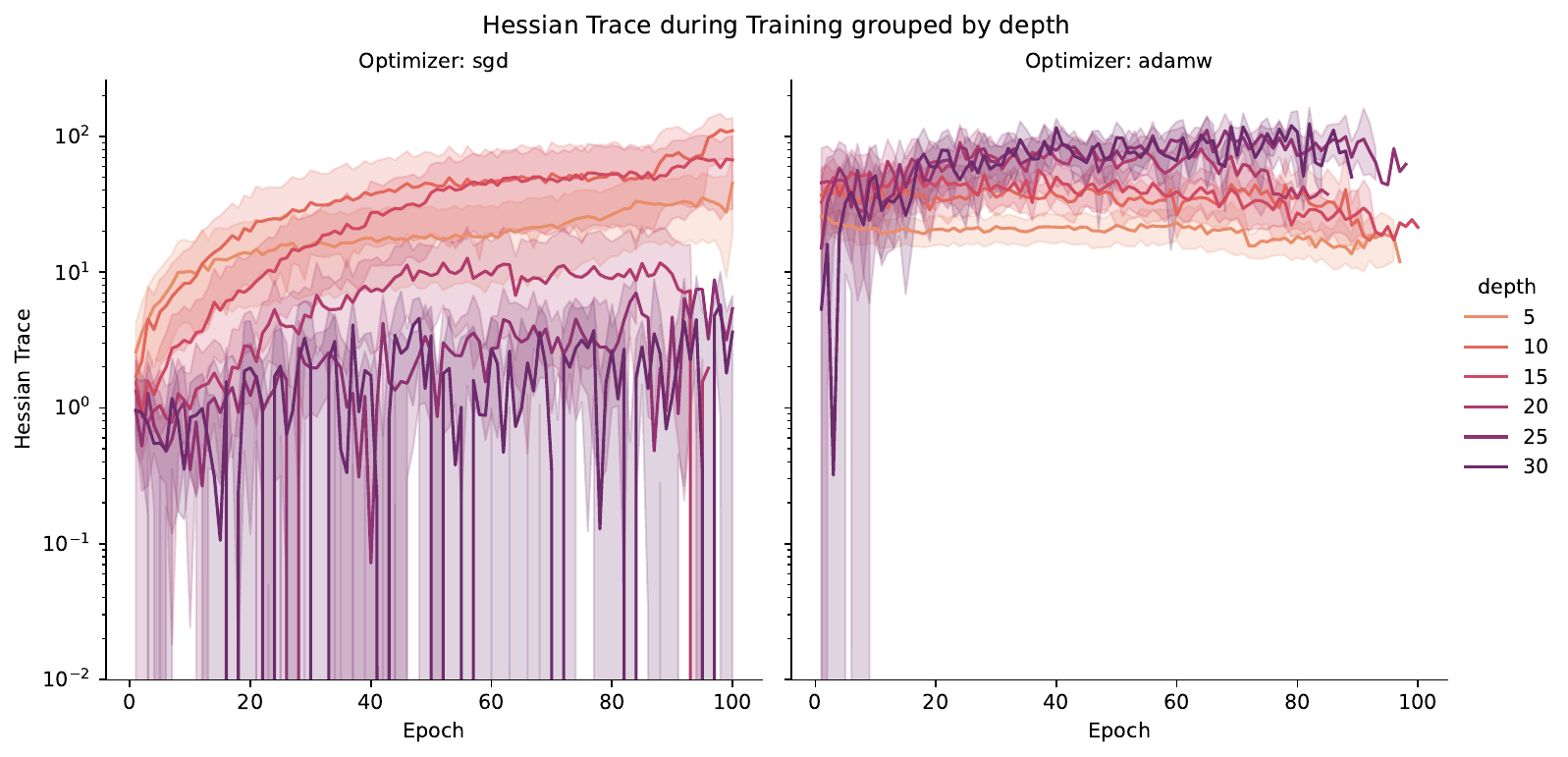}
    \caption{For SGD, as network depth increases, the Hessian trace decreases, whereas for AdamW it increases.}
    \label{fig:hessian_trace_depth}
    \end{subfigure}
    \caption{Comparison of the loss's Hessian trace training dynamics with respect to the initialization scale and the $LDL^\top$ network's depth, respectively, between the SGD and the AdamW optimizers. The confidence intervals are computed using bootstrapping of 1000 samples. }
    \label{fig:hessian_trace}
\end{figure}
\subsubsection{Parameter Movement}

Also, explore the motion of the network's movement during the training, expressed as,
\begin{align*}
    \frac{\|W_\tau - W_0\|}{W_0},
\end{align*}
the parameter's difference between its initialization. Figure \ref{fig:regime_lazy_vs_rich} illustrates a clear and consistent relationship between optimizers, such that the smaller the initialization scale, the larger the parameter movement. 
\begin{figure}[ht!]
    \centering
    \includegraphics[width=1\linewidth]{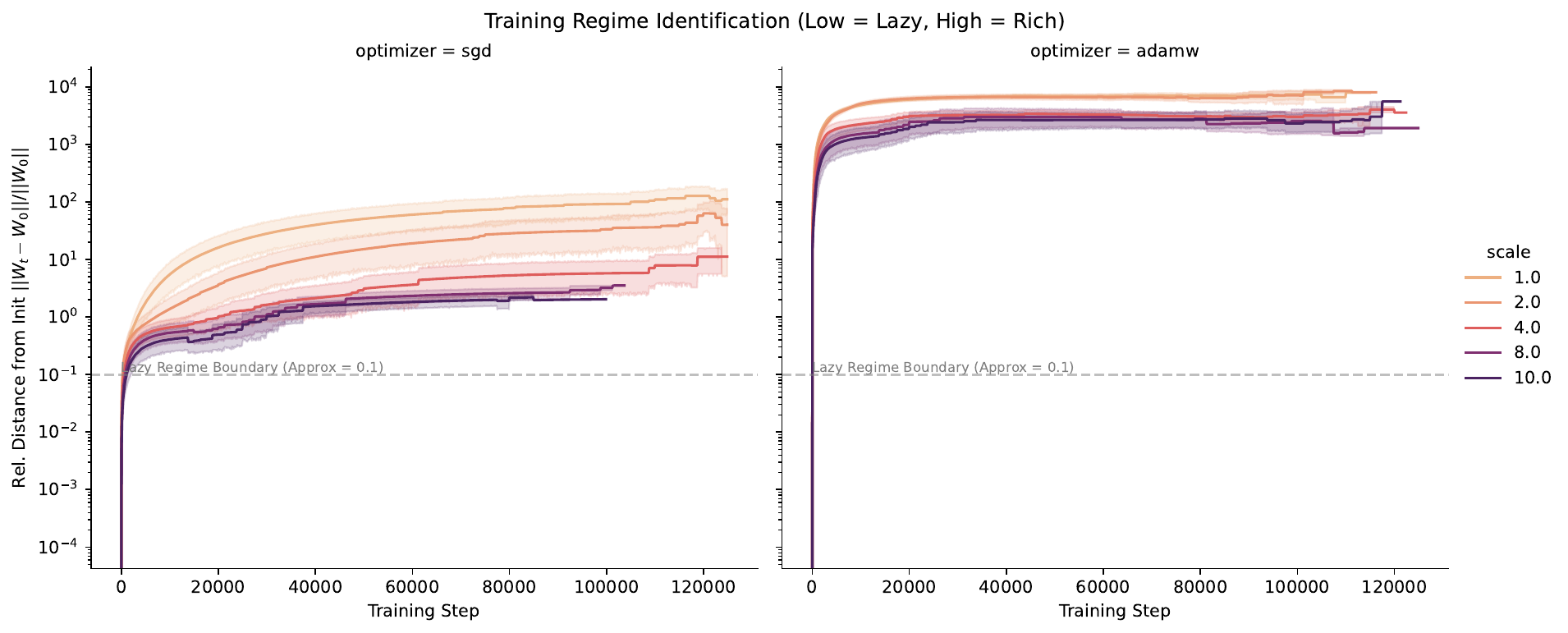}
    \caption{Hessian trace relative to the parameter movement with respect to the test AUC, the optimizer, and the initialization scale. A consistent relation shape between the initialization scale and parameter movement is observed. A smaller initialization scale leads to larger parameter movement.}
    \label{fig:regime_lazy_vs_rich}
\end{figure}
This helps indicate that the larger the initialization, the more powerful the normalization on the parameter weights is observed as in Remark \ref{rm:gradient}, given that the normalization in Equation \eqref{eq:normalization},
\begin{align*}
    M^\top M = \gamma^2 Q\diag(\frac{\mu_1}{\alpha + \mu_1}, \cdots,\frac{\mu_n}{\alpha + \mu_n})Q^\top
\end{align*}
where $Q$ is any orthogonal matrix. With a larger initialization scale, this implies that the probability that $\E{\|W_{ii}\|}$ will be larger, thus making the singular values $\mu_i$, representative of the matrix's parameters, closer to the plateauing regime of,
\begin{align*}
    \frac{\mu_i}{\alpha + \mu_i} \Rightarrow \frac{\partial}{\partial \mu_i} \frac{\mu_i}{\alpha + \mu_i}  = \frac{\alpha}{(\alpha + \mu_i )^2} \label{eq:relationship_grad}
\end{align*}
As such, the larger $\mu_i$ induced by the larger initialization scheme pushes the networks into a region where parameter gradients are smaller—making parameter training more difficult.

While the decaying gradients are assumed, when observing the parameter gradient norms throughout training, we do not observe the expected gradient collapse issue in Figure \ref{fig:parameter_grad_norm}. A relationship between the gradient norm scale and the depth and the initialization scale can be observed as expected from the Equation \eqref{eq:relationship_grad}, where Figure \ref{fig:parameter_grad_norm_scale}, shows that when the scale is one the initial gradients hover around $10^{-2}$, while when the scale is $10$ the gradients start at around $10^{-3}$, which when computing the gradient scaling ration from Equation \eqref{eq:relationship_grad} reflects the real work; however, at these gradient magnitudes the training should be drastically hindered and much smaller gradients would be expected, in the magnitudes of $10^{-7}$, to be called vanishingly small as illustrated by AdamW's ability to learn. This, in turn, leads to the still-unsolved question of why the initialization scheme did not work as expected.
%
\begin{figure}
    \centering
    \begin{subfigure}[b]{1\linewidth}
    \centering
    \includegraphics[width=1\linewidth]{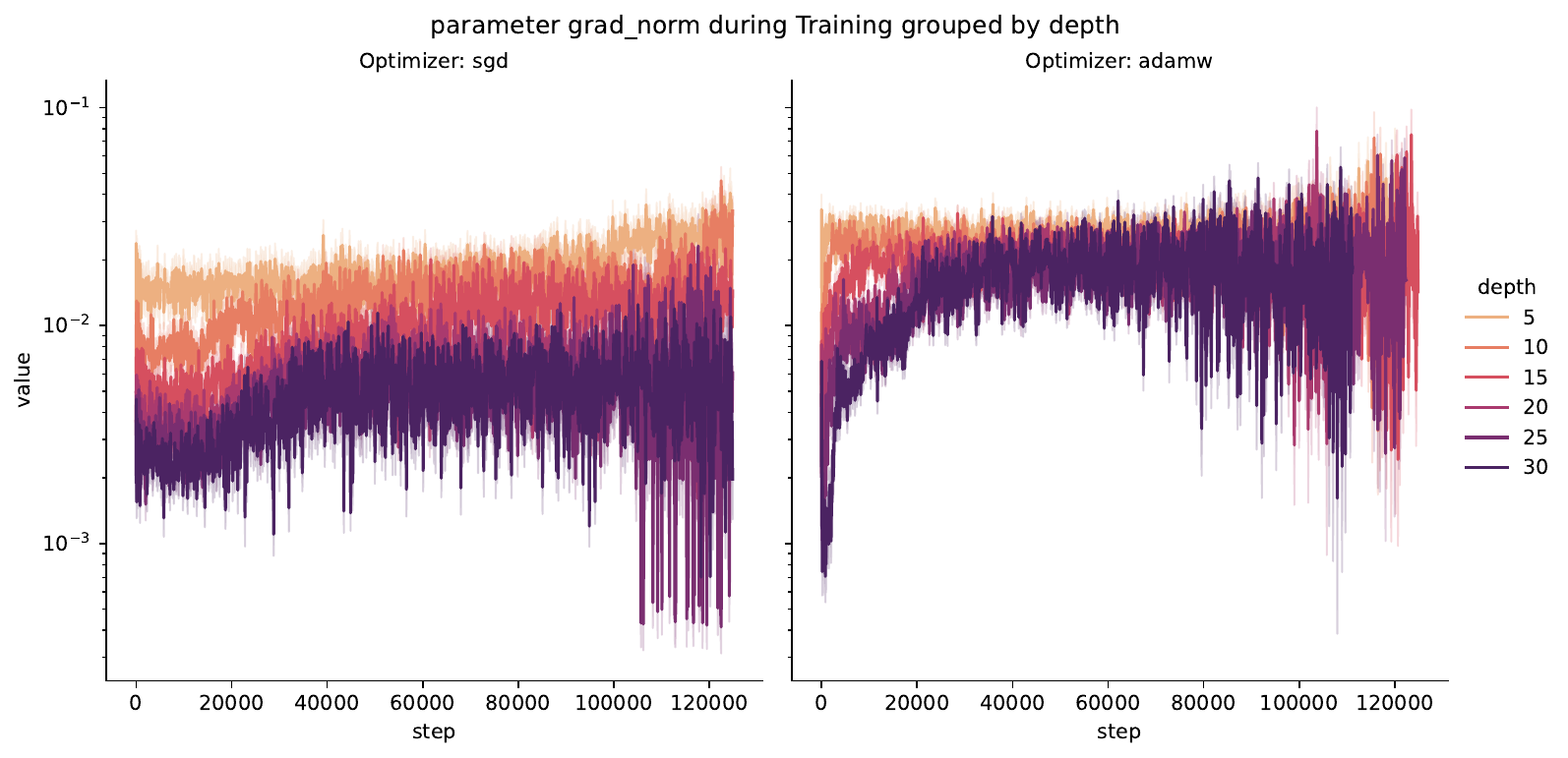}
    \caption{Demonstrates that the deeper the network, the smaller the parameter gradient norms.}
    \label{fig:parameter_grad_norm_depth}
\end{subfigure}
\begin{subfigure}[b]{1\linewidth}
    \centering
    \includegraphics[width=1\linewidth]{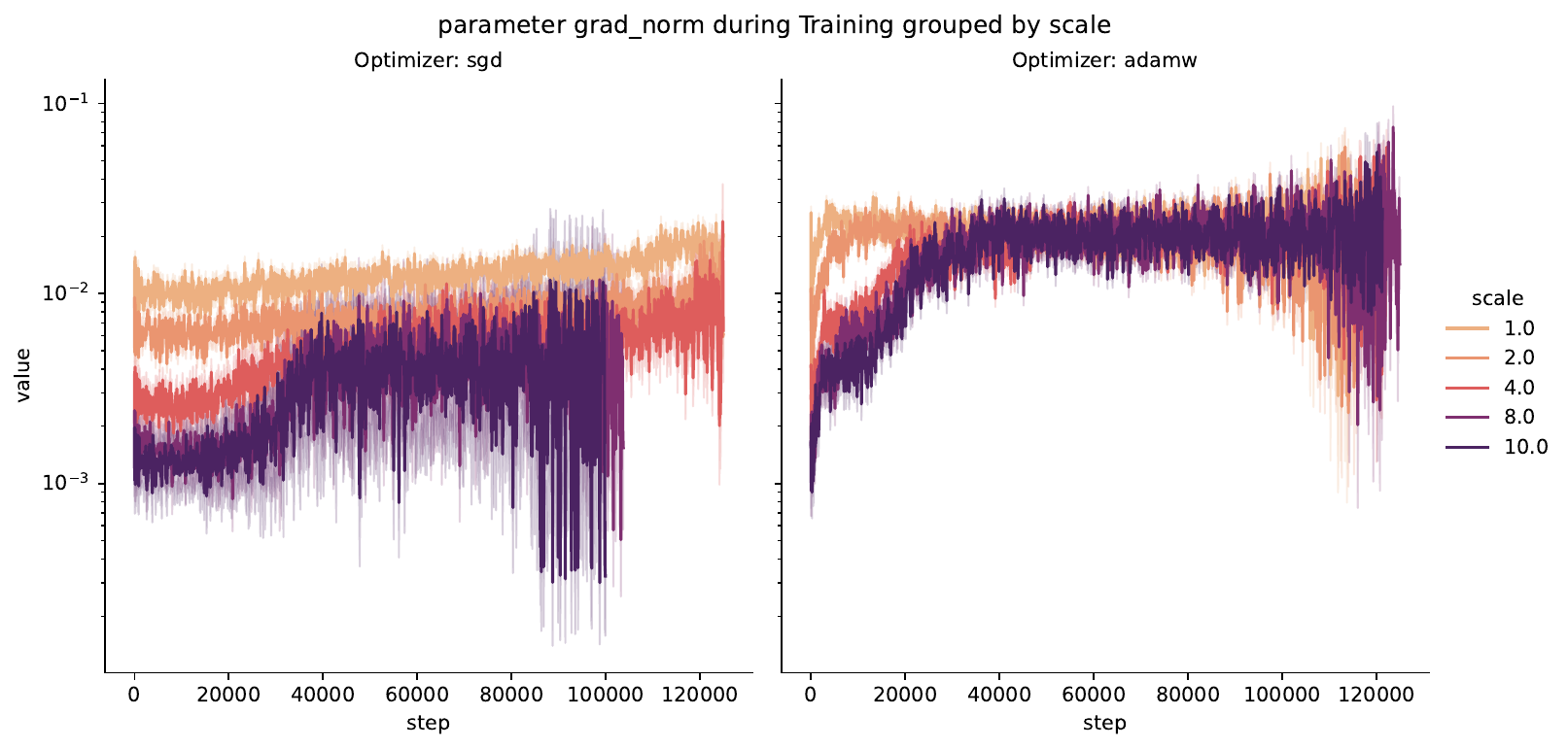}
    \caption{Demonstrates that the larger the initialization scale of the parameters, the smaller the parameter gradient norms.}
    \label{fig:parameter_grad_norm_scale}
\end{subfigure}
    \caption{Comparison of the parameter's gradient norm training dynamics with respect to the $LDL^\top$ network's depth and the initialization scale, respectively, between the SGD and the AdamW optimizers. The confidence intervals are computed using bootstrapping of 1000 samples.}
    \label{fig:parameter_grad_norm}
\end{figure}
\section{Limitations}

While we provide a mathematical formulation for the variance propagation and the expected training behavior of deep $LDL^\top$ networks, we do not see the expected catastrophic decay. This paper attempts to explain the results' reasoning based on the logged information; however, no explicit conclusion was drawn, and further research is required. 

\section{Conclusion}

For the $LDL^\top$ normalization based on Cholesky normalization, the marginal variance was derived to derive the best initial parameterization. This ensures that variance propagation in a deep Lipschitz network is maintained, preventing the network from decaying to zero as layer sizes increase. However, from the current analysis, it is deemed impossible to ensure that the Lipschitz does not decay for deep feedforward $\Ll$-Lipschitz networks, as that implies that $\gamma = 1$, which, as derived above, makes it impossible for $\Var{y} = 1$, which would have to be the requirement in Lipschitz network initialization. However, empirical analysis found that, compared to traditional Kaiming or He initialization, a scaling of $1 / \sqrt{n}$ is less effective at reducing per-layer output variance. In contrast, a scaling of $10 / \sqrt{n}$ is more effective.

We noticed, however, that changing the initialization scale alone does not effectively improve training of these networks when using a modern optimizer, especially one from the Adam family, where gradient normalization helps mitigate issues with scale. In addition, the initialization scaling does not impact the network as expected in practice. We expected catastrophic decay of the network as the network depth increased; however, this was not the case, and in fact, validation AUC seemed invariant to depth. 

\section{Code}

The code for generating the symbolic moments and the code for computing the variance distribution can be found in  \href{https://github.com/Marius-Juston/LDLTLipschitzInitialization}{https://github.com/Marius-Juston/LDLTLipschitzInitialization}.

Additional figures illustrating the training dynamics with respect to additional metrics are available in the repository, but have been omitted due to space constraints. These can further help in understanding the network's internal dynamics as it trains.   

\subsubsection*{Declaration of Generative AI and AI-assisted technologies in the writing process}

During the preparation of this work, the authors used ChatGPT (OpenAI) to improve the clarity and fluency of the English text and to help find sources. After using this tool, the author reviewed and edited the content as needed and takes full responsibility for the publication's content.

\bibliographystyle{IEEEtran}
\bibliography{main}

\appendices

\section{Higher order Wishart moments} \label{sec:moments}

We continue the list of moments from Section \ref{sec:series_expansion} below from $k=6$ to $k=10$.

\begin{itemize}
        \item $k =6$
    \begin{align*}
        \begin{bmatrix}1 & 0 & 0 & 0 & 0 & 0\\15 & 15 & 0 & 0 & 0 & 0\\155 & 135 & 50 & 0 & 0 & 0\\701 & 787 & 262 & 50 & 0 & 0\\1620 & 1827 & 787 & 135 & 15 & 0\\1348 & 1620 & 701 & 155 & 15 & 1\end{bmatrix}
    \end{align*}
        \item $k =7$
    \begin{align*}
        \begin{bmatrix}1 & 0 & 0 & 0 & 0 & 0 & 0\\21 & 21 & 0 & 0 & 0 & 0 & 0\\315 & 280 & 105 & 0 & 0 & 0 & 0\\2247 & 2569 & 889 & 175 & 0 & 0 & 0\\9324 & 10822 & 4844 & 889 & 105 & 0 & 0\\19068 & 23688 & 10822 & 2569 & 280 & 21 & 0\\15104 & 19068 & 9324 & 2247 & 315 & 21 & 1\end{bmatrix}
    \end{align*}
            \item $k =8$
\begingroup
\setlength\arraycolsep{2pt}
\begin{align*}
\resizebox{0.97\linewidth}{!}{$
            \begin{bmatrix}1 & 0 & 0 & 0 & 0 & 0 & 0 & 0\\28 & 28 & 0 & 0 & 0 & 0 & 0 & 0\\574 & 518 & 196 & 0 & 0 & 0 & 0 & 0\\5908 & 6846 & 2436 & 490 & 0 & 0 & 0 & 0\\38029 & 45070 & 20730 & 3985 & 490 & 0 & 0 & 0\\138016 & 175566 & 83280 & 20730 & 2436 & 196 & 0 & 0\\264420 & 343904 & 175566 & 45070 & 6846 & 518 & 28 & 0\\198144 & 264420 & 138016 & 38029 & 5908 & 574 & 28 & 1\end{bmatrix}
$}
\end{align*}
\endgroup  
        \item $k =9$, Given that all these matrices are persymmetric, to compress these matrices, we do not show the right symmetric second half
\begingroup
\setlength\arraycolsep{2pt}
\begin{align*}
\resizebox{0.97\linewidth}{!}{$
    \begin{bmatrix}
            1 & 0 & 0 & 0 & 0 & 0 & 0 & 0 & 0\\
            36 & 36 & 0 & 0 & 0 & 0 & 0 & 0 & \\
            966 & 882 & 336 & 0 & 0 & 0 & 0 &  & \\
            13524 & 15834 & 5754 & 1176 & 0 & 0 &  &  & \\
            124029 & 149346 & 70104 & 13941 & 1764 &  &  &  & \\
            692088 & 896238 & 437070 & 112575 &  &  &  &  &\\
            2325740 & 3092304 & 1628628 &  &  &  &  &  & \\
            4166880 & 5705232 &  &  &  &  &  &  & \\
            2998656 &  &  &  &  &  &  &  & 
            \end{bmatrix}
$}
\end{align*}
\endgroup
    \item $k =10$
    \begingroup
\setlength\arraycolsep{2pt}
\begin{align*}
\resizebox{0.97\linewidth}{!}{$
        \begin{bmatrix}
            1 & 0 & 0 & 0 & 0 & 0 & 0 & 0 & 0 & 0\\
            45 & 45 & 0 & 0 & 0 & 0 & 0 & 0 & 0 & \\
            1530 & 1410 & 540 & 0 & 0 & 0 & 0 & 0 &  & \\
            27930 & 32970 & 12180 & 2520 & 0 & 0 & 0 & &  & \\
            344961 & 420600 & 200580 & 40935 & 5292 & 0 &  &  &  & \\
            2723469 & 3576413 & 1781680 & 470920 & 60626 &  &  &  &  & \\
            13945700 & 18861430 & 10172240 & 2821775 &  &  &  &  &  & \\
            43448940 & 60666700 & 33822740 &  &  &  &  &  &  & \\
            74011488 & 105232400 &  &  &  &  &  &  &  & \\
            51290496 &  &  &  &  &  &  &  &  & 
        \end{bmatrix}
$}
\end{align*}
\endgroup
\end{itemize}

\section{Test AUC}

To better visualize the interaction between the initialization scale and the network's depth and the test AUC, we generate contour plots that approximate the discrete space as continuous ones using a Gaussian Process Regressor with the combination of a Matern kernel, to maintain sharpness in the distribution, and a White kernel, to account for noise in the distribution. 
Figure \ref{fig:test_auc_sgd_gpr} demonstrates the contour plot using the SGD optimizer. It illustrates that above a scale of 2, the network does not train and achieves a random guessing performance (AUC = 0.5). 
%
%
While the AdamW optimizer (Figure \ref{fig:test_auc_adamw_gpr}) performs much better and fails predominantly in regions of both deep networks (above 20 layers) and larger scale factors (around 6).
%
%
\begin{figure}[ht!]
    \centering
    \begin{subfigure}[b]{1\linewidth}
    \includegraphics[width=1\textwidth]{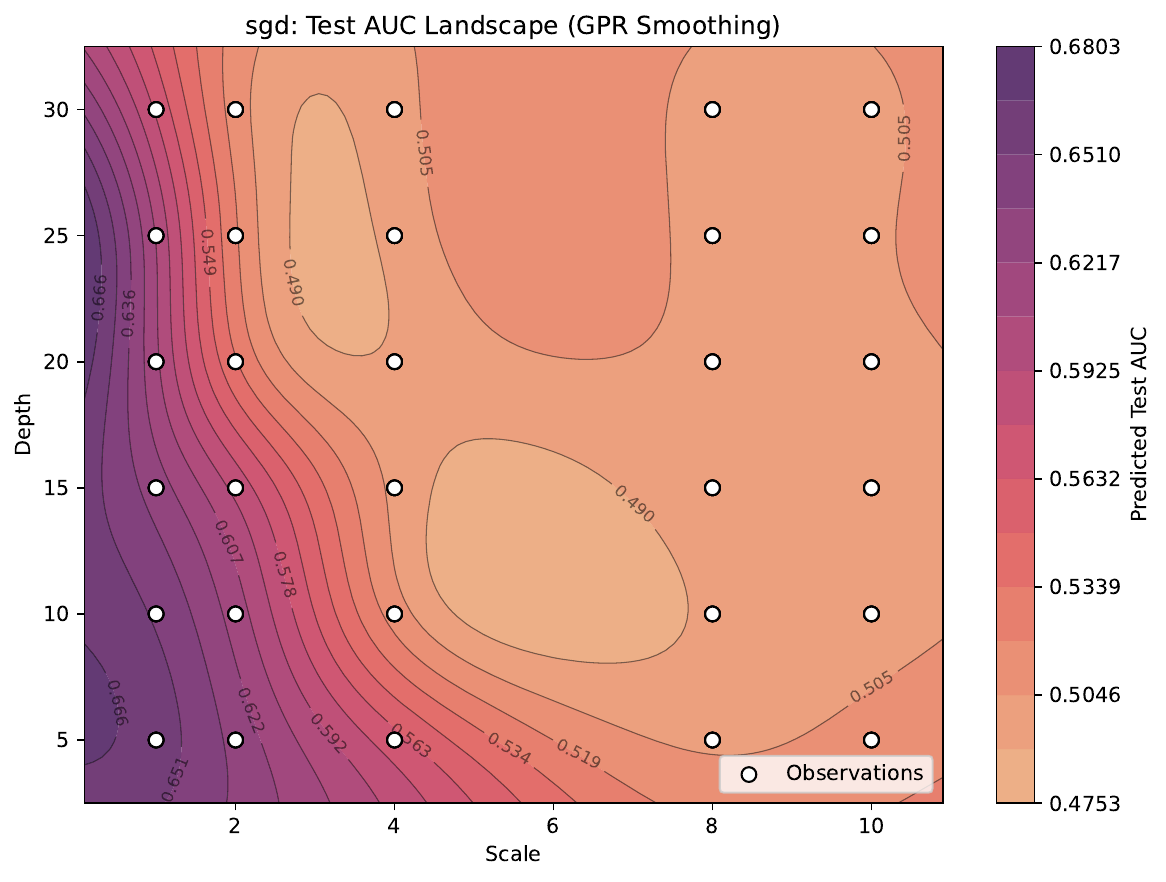}
    \caption{It can be noticed that the best AUC is achieved when the initialization scale is small.}
    \label{fig:test_auc_sgd_gpr}
\end{subfigure}
\hfill
    \begin{subfigure}[b]{1\linewidth}
    \includegraphics[width=1\textwidth]{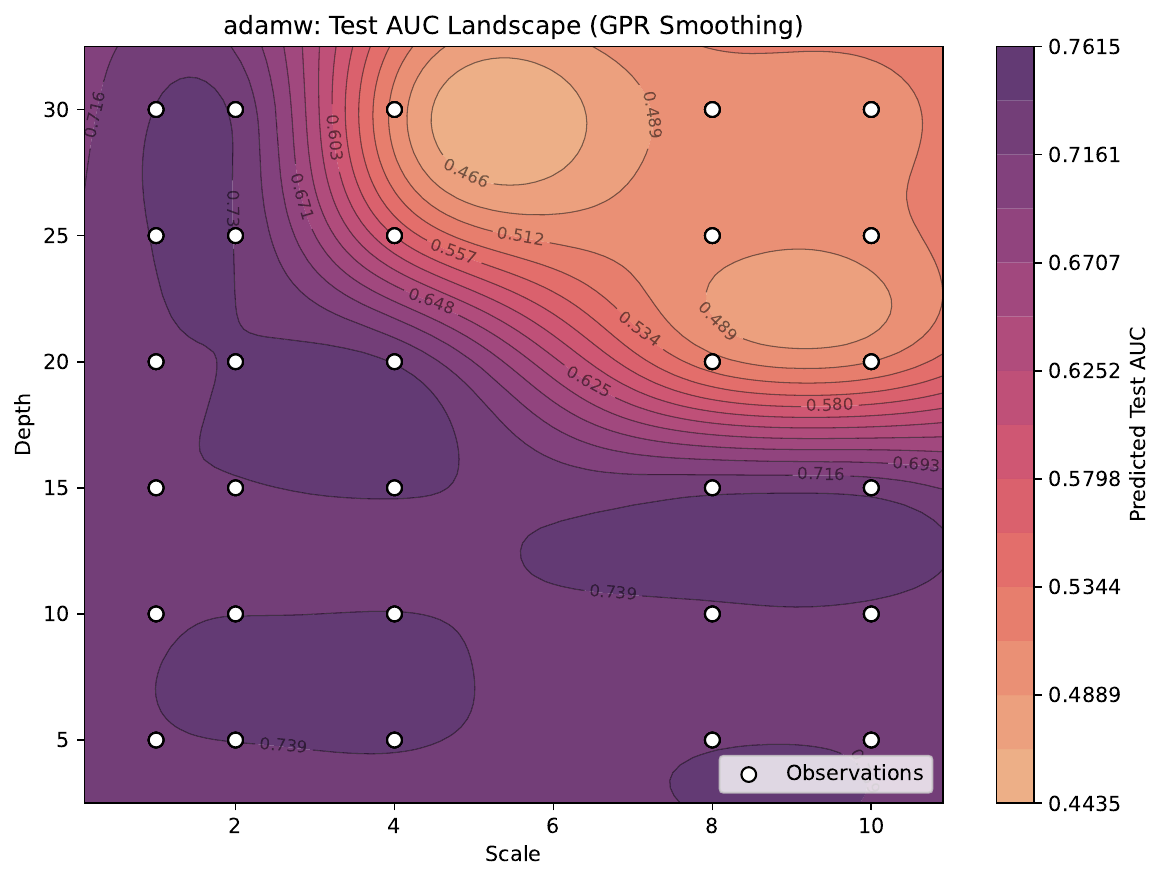}
    \caption{It can be observed that the network performs worse on networks with large initialization scales and deep network depths.}
    \label{fig:test_auc_adamw_gpr}
\end{subfigure}
    \caption{Contour plot of the test data set's AUC with respect to the initialization scale and the $LDL^\top$ network's depth. The network was trained using the SGD and AdamW optimizers, respectively. The space is approximated as continuous using a Gaussian Process Regressor (GPR) with a Matern and White kernel combination.}
    \label{fig:test_auc_gpr}
\end{figure}
\section{Input-Output CKA}

We further explore the usage of the Input Centered Kernel Alignment \cite{KornblithSimilarityRevisited}, computed as 
\begin{align}
    HSIC(K,L) = \frac{1}{(n-1)2}\trace{KHLH},
\end{align}
where $H$ is the centering matrix, $H_n = I_n - \frac{1}{n}\boldsymbol{1}\boldsymbol{1}^\top$. With the equivalent CKA metric, while being invariant to isotropic scaling,
\begin{align*}
    CKA(K,L)= \frac{HSIC(K,L)}{\sqrt{HSIC(K,K)HSIC(L, L)}}.
\end{align*}
The CKA metric is invariant to orthogonal transformation and isotropic scaling, but not invertible linear transformation. If CKA is close to 1, this implies a near-identity representation. Figures \ref{fig:input_output_cka_depth} and \ref{fig:input_output_cka_scale}, with respect to the network's depth and scale, show the computation of the CKA between the reference input features and the predicted output features. If a network outputs a CKA close to 1, it means the network has not learned and is just providing the input's identity. When exploring CKA of AdamW, we see that it achieves small values and converges within 40 epochs, without significant changes later on, and shows no relationship between the scale and depth hyperparameters.

In contrast, SGD shows that during training, it initially follows the same regime as AdamW; however, with a similar convergence time to AdamW, the input CKA increases from its initial position rather than decreasing as in AdamW. It is unclear why that is the case. In addition, we observe a general trend: the larger the initialization scale, the smaller the resulting CKA.
%
%
\begin{figure}[ht!]
     \centering
     \begin{subfigure}[b]{1\linewidth}
         \centering
         \includegraphics[width=\textwidth]{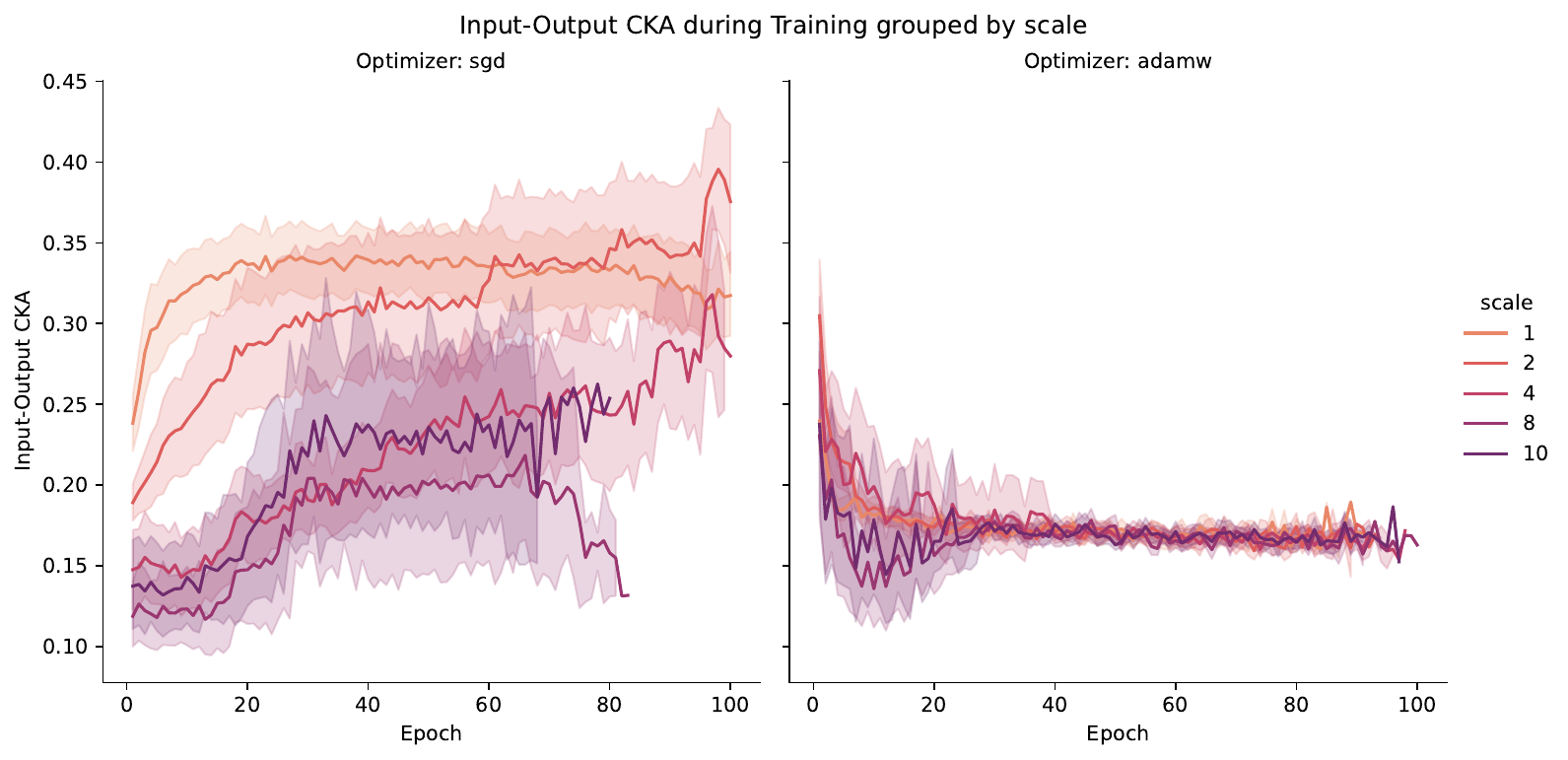}
         \caption{Illustrates AdamW converging at all depths around 40 epochs after starting, while SGD increases in value during its run. Scale in SGD is related to CKA training. }
         \label{fig:input_output_cka_scale}
     \end{subfigure}
     \hfill
     \begin{subfigure}[b]{1\linewidth}
         \centering
         \includegraphics[width=\textwidth]{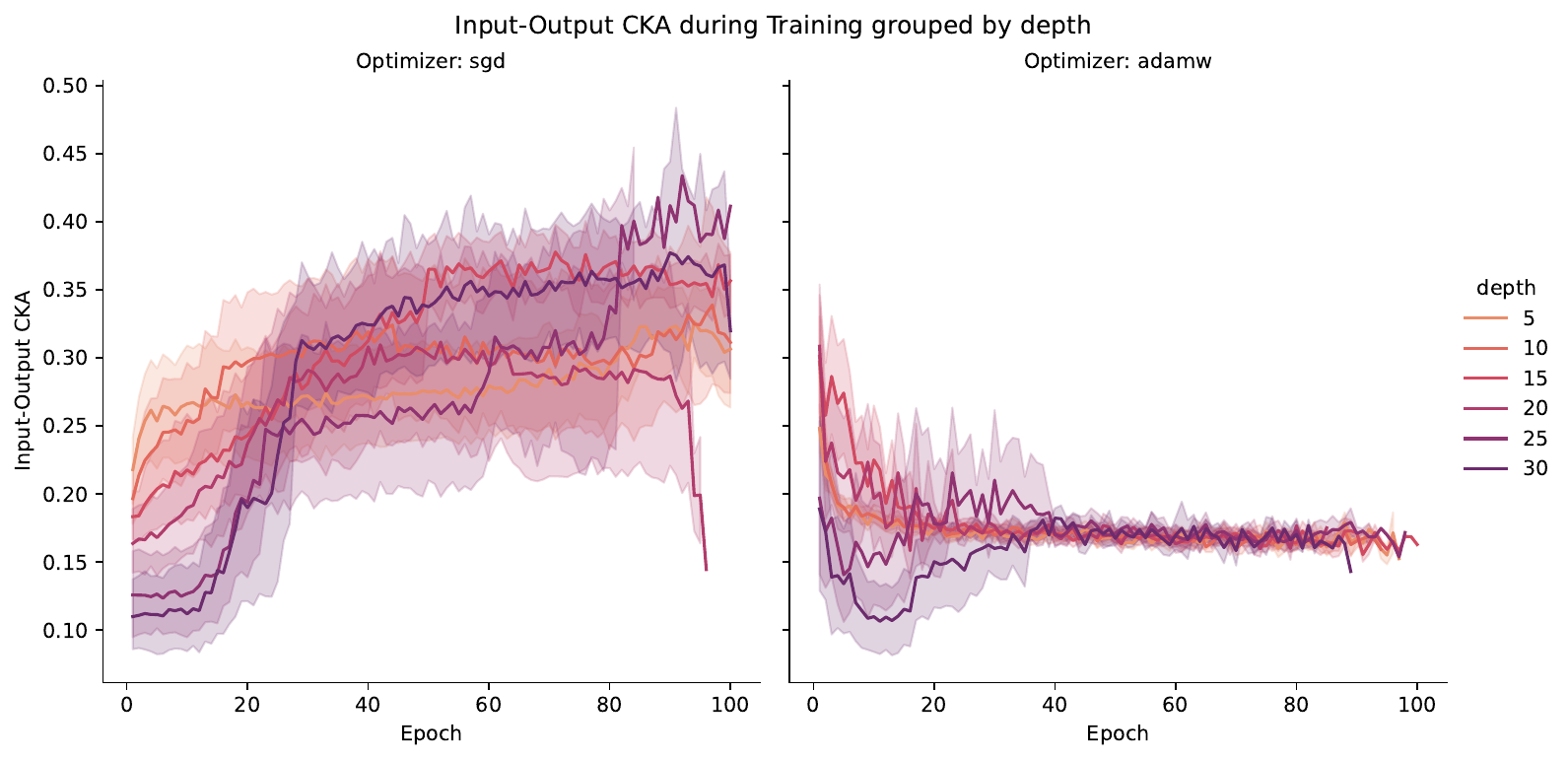}
         \caption{Illustrates AdamW converging at all depths around 40 epochs after starting, while SGD increases in value during its run.}
         \label{fig:input_output_cka_depth}
     \end{subfigure}
        \caption{Comparison of the input-output CKA training dynamics with respect to the initialization scale, Figures \ref{fig:input_output_cka_scale} and depth, respectively, Figure \ref{fig:input_output_cka_depth}, between the SGD and the AdamW optimizers. The confidence intervals are computed using bootstrapping of 1000 samples.}
        \label{fig:input_Cka}
\end{figure}
\section{Network Activation}

To verify that the network is not effectively dead due to activation propagation or other potential issues, we observe the stable rank, a continuous counterpart to matrix rank \cite{Ipsen2024StableMatrices},
\begin{align*}
    SR(W) = \frac{\|W\|_F^2}{\|W\|_2^2},
\end{align*}
From the training we notice that the stable rank of both the feature propoagatoin throughout the layers, Figure \ref{fig:rank_collapse_depth_evolution}, decays drastically to a single component by the end of it is, with only a slight improvement for smaller initialization scales, This is interesting as Figure \ref{fig:activation_dead_fraction_scale}, does not show a drastic number of dead neurons, due to the ELU activation function. Similarly, the gradient throughout training follows a very similar low-rank representation, as shown in Figure \ref{fig:error_signal_stable_rank_scale}. The reasoning warrants further study.
\begin{figure}[ht!]
    \centering
    \includegraphics[width=1\linewidth]{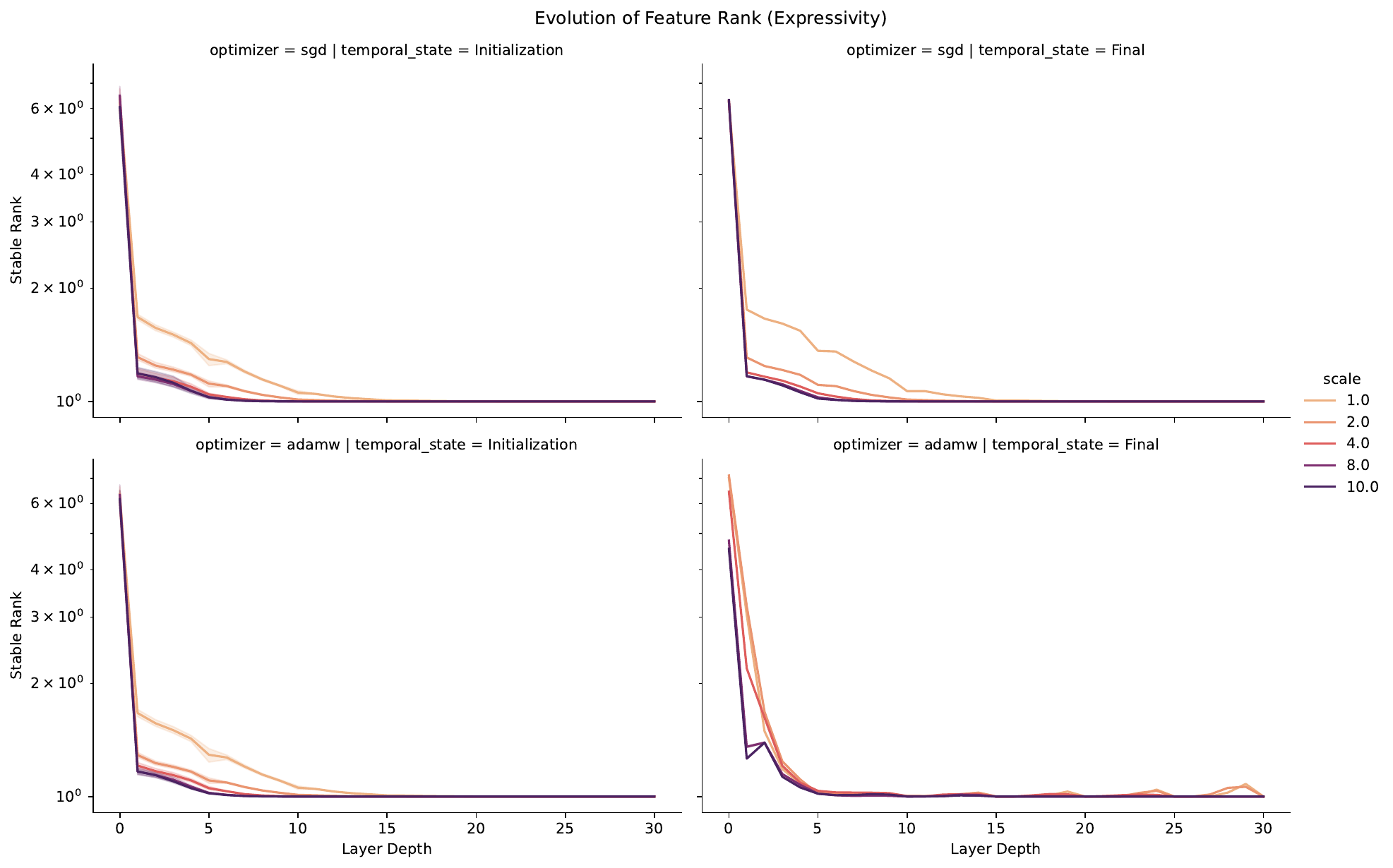}
    \caption{Stable rank between SGD and AdamW throughout the network layers at different scales, observed at the first epoch and the last epoch. This illustrates a drastic decay in rank after a couple of layers.}
    \label{fig:rank_collapse_depth_evolution}
\end{figure}
\begin{figure}[ht!]
    \centering
    \includegraphics[width=1\linewidth]{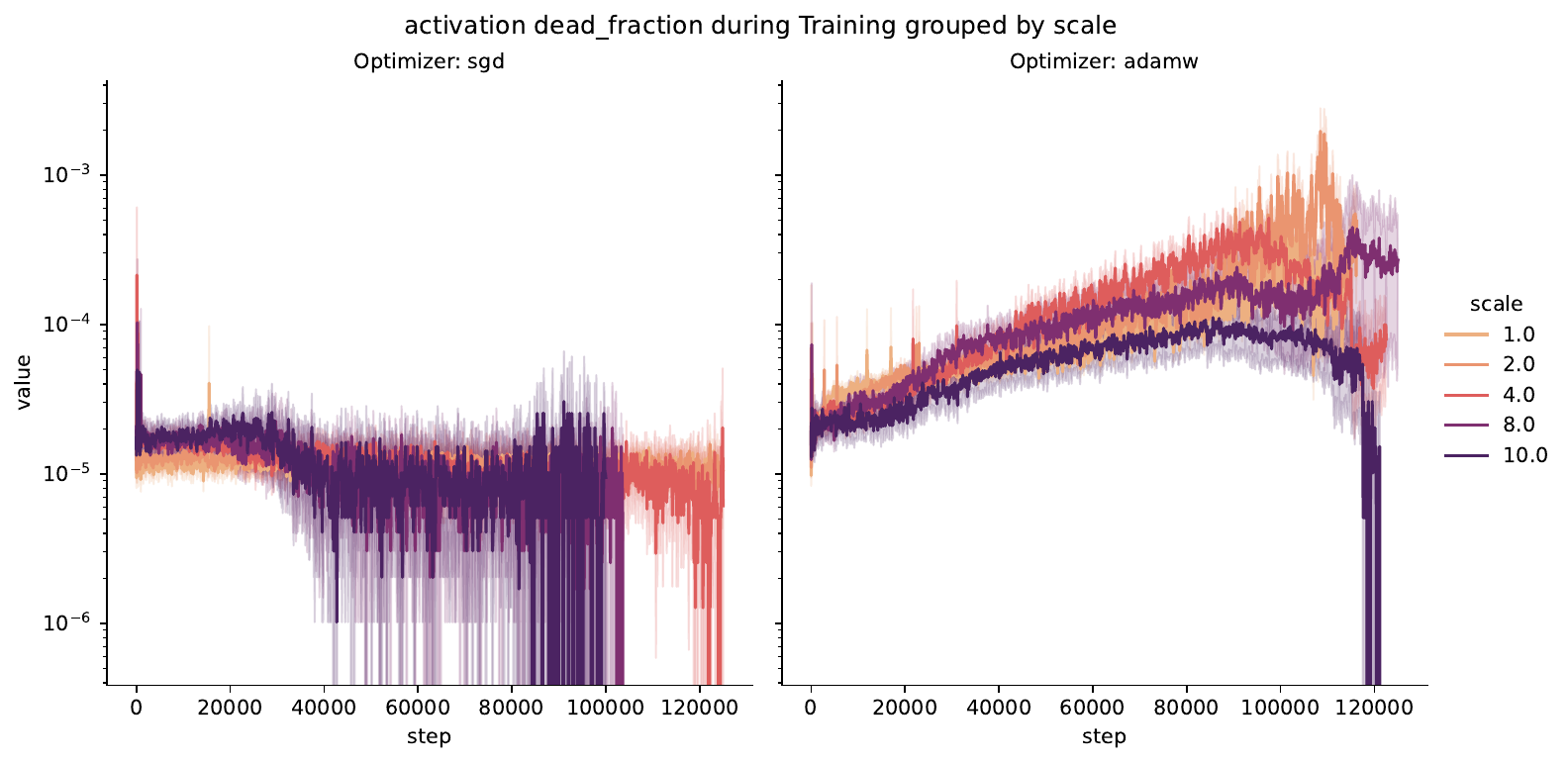}
    \caption{Percentage of dead neurons (a dead neuron has $|x| < 10^{-6}$), between optimizers. Only a minimal number are dead.}
    \label{fig:activation_dead_fraction_scale}
\end{figure}
\begin{figure}[ht!]
    \centering
    \includegraphics[width=1\linewidth]{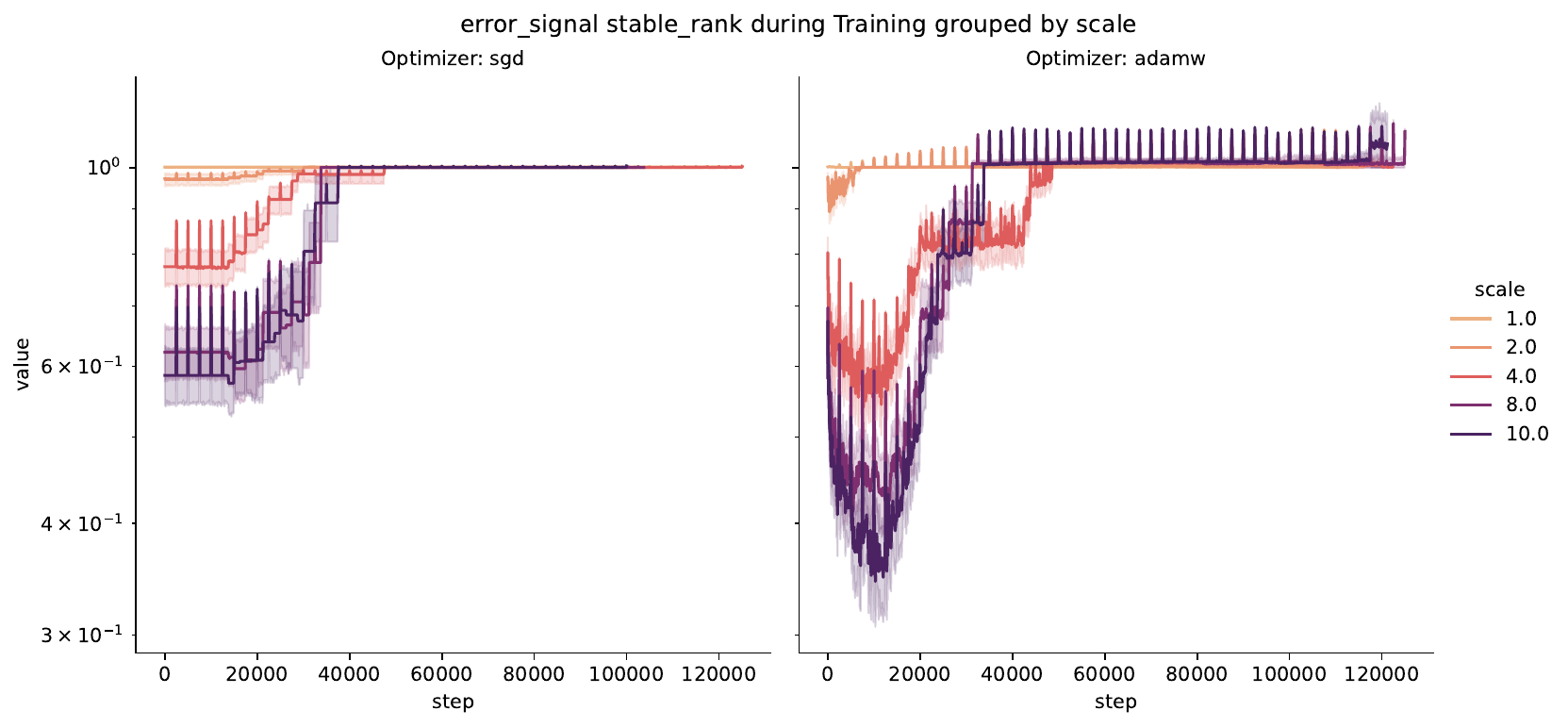}
    \caption{The stable rank of all the gradients with respect to scale. Illustrates rank collapse as ranks are one.}
    \label{fig:error_signal_stable_rank_scale}
\end{figure}
\end{document}